\newcommand{\systemSZ}{{system~W}\xspace}
\newcommand{\systemSZgross}{{System~W}\xspace}
\newcommand{\namePS}{preferred structure\xspace}
\newcommand{\fctOPname}{{\ensuremath{\mathit{OP}}}}
\newcommand{\fctOP}[1]{\ensuremath{\fctOPname(#1)}} 
\newcommand{\fctOPvector}[1]{\ensuremath{(\R_0,\ldots,\R_k)}}
\newcommand{\cspRArg}[1]{\ensuremath{\mathit{CR}(#1)}}
\newcommand{\leSZ}{\ensuremath{<^{\sf{w}}_{\R}}}
\newcommand{\leSZarg}[1]{\ensuremath{<^{\sf{w}}_{#1}}}
\newcommand{\inferSZ}{\ensuremath{\nmableit^{\sf{w}}_{\R}}}
\newcommand{\notinferSZ}{\ensuremath{\notnmableit^{\sf{w}}_{\R}}}
\newcommand{\inferSZArg}[1]{\ensuremath{\nmableit^{\sf{w}}_{#1}}}
\newcommand{\inferSZarg}[1]{\ensuremath{\inferSZArg{#1}}}
\newcommand{\falsWname}{\ensuremath{\mathit{\xi}}}
\newcommand{\falsW}{\ensuremath{\falsWname}}
\newcommand{\verify}{\ensuremath{\sf{v}}}
\newcommand{\falsif}{\ensuremath{\sf{f}}}
\newcommand{\neutra}{\ensuremath{\mathit{-}}}
\newcommand{\ol}[1]{\overline{#1}}
\newcommand{\cspR}{\ensuremath{\mathit{CR}(\R)}}
\newcommand{\solutionsR}{\ensuremath{\solutionsOf{\cspR}}}
\newcommand{\solutionsOf}[1]{\ensuremath{\mathit{Sol}(#1)}}
\newcommand{\akzeptiert}{\ensuremath{\models}}
\newcommand{\induzierteOCF}[1]{\ensuremath{\kappa_{#1}}}
\newcommand{\R}{\ensuremath{\mathcal R}}
\newcommand{\notB}{\overline{B}}
\newcommand{\kappaiminus}[1]{\eta_{#1}}
\newcommand{\kappaiminusVektor}{\ensuremath{\vv{\eta}}}
\newcommand*{\centernot}{\mathpalette\@centernot
}
\def\@centernot#1#2{\mathrel{\rlap{\settowidth\dimen@{$\m@th#1{#2}$}\kern.5\dimen@
			\settowidth\dimen@{$\m@th#1=$}\kern-.5\dimen@
			$\m@th#1\not$}{#2}}}
\DeclareRobustCommand\nmableitSymb{\mathrel|\mkern-.5mu\joinrel\sim} \newcommand{\nmableit}{\ensuremath{\mbox{$\,\nmableitSymb\,$}}} \newcommand{\notnmableit}{\ensuremath{\mbox{$\,\centernot\nmableitSymb\,$}}} 
\newcommand{\satzCL}[2]{\ensuremath{(#1|#2)}}
\newcommand{\Vocabulary}{\ensuremath{\Sigma}}
\newcommand{\symbolForRbirds}{\ensuremath{\mathit{bird}}}
\newcommand{\Rbirds}{\ensuremath{{\R_{\symbolForRbirds}}}}
\newcommand{\RbirdsStern}{\ensuremath{\R_{\symbolForRbirds}^{\ast}}}
\newcommand{\MEINqed}{\hfill\qed}
\newtheorem{Observation}{Observation}
\title{Nonmonotonic Inferences with\\ Qualitative Conditionals based on\\ Preferred Structures on Worlds}
\author{Christian Komo \and Christoph Beierle}
\institute{
Dept. of Computer Science, FernUniversit\"{a}t in Hagen,
58084 Hagen, Germany
}
\begin{document}
\maketitle
\begin{abstract}
A conditional knowledge base \R\ is a set of conditionals of the form "If A,
the usually B". Using structural information derived from the conditionals
in \R, we introduce the preferred structure relation on worlds.
The preferred structure relation is 
the core ingredient of a new inference relation called system~W inference that
inductively completes the knowledge given explicitly in \R. We show that system~W exhibits desirable inference properties like satisfying system P and avoiding, in contrast to e.g. system Z, the drowning problem. It fully captures and strictly extends both system Z and skeptical c-inference.
In contrast to skeptical c-inference, it does not require to solve a complex constraint satisfaction problem, but is as tractable as system~Z.

\end{abstract}

\section{Introduction}
\label{sec_introduction}

In the area of knowledge representation and reasoning, conditionals play a prominent role. Nonmonotonic reasoning investigates qualitative conditionals of the form ``\emph{If A then usually B}". Various semantical approaches
for inferences based on
sets of such conditionals as well as criteria and postulates for evaluating
the obtained inference relations have been proposed
(cf.\ 
\cite{Adams1975,Lewis73,Spohn1988,KrausLehmannMagidor90,Pearl1990,GoldszmidtPearl1991AI,LehmannMagidor92short,BenferhatDuboisPrade92,Makinson94,DuboisPrade1994ConditionalObjects,GoldszmidtPearl1996,BenferhatDuboisPrade99short,KernIsberner2001habil,DuboisPrade2015WhereDoWeStand}).
Among the different semantical models of conditional knowledge bases are
Spohn's ordinal conditional functions (OCFs)~\cite{Spohn1988,Spohn2012book},
also called ranking functions. 
An OCF $\kappa$ assigns a degree of surprise to each world $\omega$, the higher the value $\kappa(\omega)$ assigned to $\omega$, the more surprising $\omega$.
Each $\kappa$ that accepts a set \R\ of conditionals, called a knowledge base, induces a nonmonotonic inference relation that inductively completes the explicit knowledge given in \R.

Two inference relations which are defined based on specific OCFs obtained
from a knowledge base \R\ have received some attention:
system~Z~\cite{Pearl1990,GoldszmidtPearl1996} and c-represen\-tations~\cite{KernIsberner2001habil,KernIsberner2004amai}, or the induced inference relations, respectively, both show excellent inference properties.
System Z is based upon the ranking function $\kappa^Z$, which is the unique Pareto-minimal OCF that accepts \R.
The definition of $\kappa^Z$ crucially relies on the notions of \emph{tolerance}
and of \emph{inclusion-maximal ordered partition} of \R\  obtained via the
tolerance relation~\cite{Pearl1990,GoldszmidtPearl1996}.
Among the OCF models of \R, c-rep\-res\-en\-tations are special models obtained by
assigning an individual impact to each conditional and generating 
the world ranks as the sum of impacts of falsified conditionals
 \cite{KernIsberner2001habil,KernIsberner2004amai}.
While for each consistent \R, the system~Z ranking function $\kappa^Z$ is uniquely determined, there may be many different c-representations of \R.
Skeptical c-inference  \cite{BeierleEichhornKernIsberner2016FoIKSshort,BeierleEichhornKernIsbernerKutsch2018AMAI} is the inference relation obtained by taking all c-representations of \R\ into account.

It is known that system Z and skeptical c-inference both satisfy system P \cite{KrausLehmannMagidor90,GoldszmidtPearl1996,BeierleEichhornKernIsbernerKutsch2018AMAI} and other desirable properties. Furthermore, there are system Z inferences that are not obtained by skeptical c-inference, and on the other hand, there are skeptical c-inferences that are not system Z inferences \cite{BeierleEichhornKernIsbernerKutsch2018AMAI}. Another notable difference between system Z and skeptical c-inference is that
the single unique system Z model \cite{Pearl1990} can be computed much easier than skeptical c-inference which involves many models obtained from the solutions of a complex constraint satisfaction problem \cite{BeierleEichhornKernIsbernerKutsch2018AMAI}.
In recently published work \cite{KomoBeierle2020ISAIM}, we showed that 
the exponential lower bound  $  2^{n-1}$ is needed as possible impact factor for
c-representations to fully realize skeptical c-inference, supporting the observation that skeptical c-inference is less tractable than system~Z inference
(cf.\ \cite{GoldszmidtPearl1996,BeierleEichhornKernIsbernerKutsch2018AMAI}).

Inspired by our findings in  \cite{KomoBeierle2020ISAIM}, here
we develop the \emph{\namePS} relation
on worlds 
and propose the new nonomonotonic \emph{\systemSZ inference} based on it.
The main contributions of this paper are:

\begin{itemize}
\item We introduce the \emph{\namePS relation \leSZ} on worlds based on the notions of tolerance and verification/falsification behavior of a knowledge base \R.

\item By employing \leSZ, we develop a new inference relation, called \emph{\systemSZ inference}, which is as tractable as  system Z.

\item  We prove that \systemSZ inference captures and strictly extends both system~Z inference and skeptical c-inference.
  
\item We show that \systemSZ inference exhibits desirable inference properties like satisfying the axioms of system P and avoiding the drowning problem.

\end{itemize}

The rest of the paper is organized as follows. After briefly recalling the required in Section~\ref{sec_preliminaries}, we introduce  the \namePS on worlds and prove several of its properties in Section~\ref{sec_sz_structure}.
In Section~\ref{sec_system_sz}, we give the formal definition of \systemSZ, illustrate it with various examples and show its main properties. 
In Section~\ref{sec_conclusions}, we conclude and point out future work.

\section{Conditional logic, system Z, and c-Representations}\label{sec_preliminaries}

Let $\Sigma=\{v_1,...,v_m\}$ be a propositional alphabet.
A \emph{literal} is the positive ($v_i$) or negated ($\ol{v_i}$) form of a propositional variable,
$\dot v_i$ stands for either $v_i$ or $\ol{v_i}$.
From these we obtain the propositional language $\mathcal{L}$ as the set of formulas of $\Sigma$ closed under negation $\neg$, conjunction $\wedge$, and disjunction $\vee$.
For shorter formulas, we abbreviate conjunction by juxtaposition (i.e., $AB$ 
stands for
$A\wedge B$), and negation by overlining (i.e., $\ol{A}$ is equivalent to $\neg A$).
Let $\Omega_{\Sigma}$ denote the set of possible worlds over $\mathcal{L}$; $\Omega_{\Sigma}$ will be taken here simply as the  set of all propositional interpretations over $\mathcal{L}$ and can be identified with the set of all complete conjunctions over \Vocabulary; we will often just write $\Omega$ instead of $\Omega_{\Sigma}$.
For $\omega \in \Omega$, $\omega \models A$ means that the propositional formula $A \in \mathcal{L}$ holds in the possible world $\omega$.
With $\Omega_A = \{\omega \in \Omega_{\Sigma} \mid \omega \models A\}$,
we denote the set of all worlds in which $A$ holds.

A \emph{conditional} $(B|A)$ with $A,B\in\mathcal{L}$ encodes the defeasible rule ``if $A$ then normally $B$'' and is a trivalent logical entity with the evaluation 
\cite{deFinetti37origLong,KernIsberner2001habil} 
\begin{equation}\label{eq_finetti}
 \begin{split}
  \llbracket(B|A)\rrbracket_\omega&=\left\{\begin{array}{l@{\quad}l@{\quad}l}
                                            \verify &\mbox{iff\quad} \omega\models AB&\mbox{(verification)} \, , \\
                                            \falsif &\mbox{iff\quad} \omega\models A\ol{B}&\mbox{(falsification)} \, , \\
                                            \neutra &\mbox{iff\quad} \omega\models \ol{A}&\mbox{(not applicable)} \, .
                                           \end{array}\right.
 \end{split}                                       
 \end{equation}

An \emph{ordinal conditional function} (OCF, ranking function)~\cite{Spohn1988,Spohn2012}
is a function $\kappa:\Omega\to\mathbb{N}_0\cup\{\infty\}$ that assigns to each world $\omega\in\Omega$ an implausibility rank $\kappa(\omega)$:
the higher $\kappa(\omega)$, the more surprising $\omega$ is.
OCFs have to satisfy the normalization condition that there has to be a world that is maximally plausible, 
i.e., 
$\kappa^{-1}(0)\neq\emptyset$.
The rank of a formula $A$ is defined by
\(  \kappa(A)=\min\{\kappa(\omega) \mid \omega\models A\}\).
An OCF \(\kappa\) \emph{accepts} a conditional \(\satzCL{B}{A}\), denoted by \(\kappa\akzeptiert\satzCL{B}{A}\), if the verification of the conditional is less surprising than its falsification, i.e.,
\(\kappa\akzeptiert\satzCL{B}{A}\) iff \(\kappa(AB) < \kappa(A\ol{B})\).
This can also be understood as a nonmonotonic inference relation between the premise $A$ and the conclusion $B$: 
Basically, we say that $A$ \emph{$\kappa$-entails $B$}, written $A\nmableit^\kappa B$,
if $\kappa $  accepts 
$(B|A)$; formally this if given by
\begin{equation}\label{eq_kappa_entailment}
A\nmableit^\kappa B \mbox{\, \ \ iff\, \ \ } A \equiv \bot \text { or } \kappa(AB) \ < \  \kappa(A\ol{B}).
\end{equation}
Note that the reason for including the disjunctive condition in \eqref{eq_kappa_entailment} is to ensure that $\nmableit^\kappa$ satisfies supraclassicality, i.e., $A \models B$ implies $A \nmableit^\kappa B$, also for the case $A \equiv \bot$ as it is required, for instance, by 
the reflexivity axiom $A \nmableit A$ of system P \cite{Adams1975,KrausLehmannMagidor90}.
Let us remark that $\kappa$-entailment is based on the total preorder on possible worlds induced by a ranking function and can be expressed equivalently by:
\begin{equation}\label{eq_kappa_entailment_equiv}
A\nmableit^\kappa B \mbox{\, \ \ iff\, \ \ }  \forall \omega' \in \Omega _{A \overline{B} } \; \exists \omega \in \Omega_{A B } \; \kappa(\omega) < \kappa(\omega') \, .
\end{equation}

The acceptance relation 
is extended as usual to a set \R\ of conditionals, called a \emph{knowledge base}, by defining \(\kappa \akzeptiert \R\) iff \(\kappa  \akzeptiert \satzCL{B}{A}\) for all \(\satzCL{B}{A} \in \R\).
This is synonymous to saying that \(\kappa\) is \emph{admissible} with respect to \R\xspace\cite{GoldszmidtPearl1996}, or that  \(\kappa\) is
a \emph{ranking model} of \R.
\(\R\) is \emph{consistent} iff it has a ranking model.

Two inference relations which are defined by specific OCFs obtained
from a knowledge base \R\xspace have received some attention:
system~Z~\cite{Pearl1990} and c-represen\-tations~\cite{KernIsberner2001habil,KernIsberner2004amai}, or the induced inference relations, respectively, both show excellent inference properties.
We recall both approaches briefly.

System Z~\cite{Pearl1990} is based upon the ranking function $\kappa^Z$, which is the unique Pareto-minimal OCF that accepts \R.
The definition of $\kappa^Z$ crucially relies on the notion of \emph{tolerance}.
A conditional $\satzCL{B}{A}$ is \emph{tolerated} by a set of conditionals \R\xspace if there is a world $\omega\in\Omega$ such that $\omega\models AB$ and $\omega\models\bigwedge_{i=1}^n(A_i\Rightarrow B_i)$, i.e., iff \(\omega\) verifies $\satzCL{B}{A}$ and does not falsify any conditional in \(\R\).
For every consistent knowledge base, the notion of tolerance yields
 an ordered partition $(\R_0,...,\R_k)$ of $\R$, where each $\R_i$
is tolerated by $\bigcup_{j=i}^k\R_j$. 
The \emph{inclusion-maximal partition} of $\R$, in the following denoted by
$\fctOP{\R} = \fctOPvector{\R}$, is the
ordered partition
of $\R$ where each $\R_i$ is the (with respect to set inclusion) maximal subset of $\bigcup_{j=i}^k\R_j$ that is tolerated by $\bigcup_{j=i}^k\R_j$. 
 This partitioning is unique due to the maximality and can be computed using the consistency test algorithm given in~\cite{GoldszmidtPearl1996}; for an inconsistent knowledge base \R, $\fctOP{\R}$ does not exist.
 Using $\fctOP{\R} = \fctOPvector{\R}$, 
the system Z ranking function $\kappa^Z$ is defined by
\begin{equation}\label{eq_system_z}
\kappa^Z(\omega) := \begin{cases}
0 \, , & \text{ if  } \omega \textrm{ does not falsify  any conditional $r\in \R$}     , \\
1 + \max_{ \substack {1\leq i \leq n \\ \omega\models A_i \overline{B_i } } } Z(r_i)  , & \text{otherwise}  ,
\end{cases}
\end{equation}
where the function $Z:\mathcal{R} \to \mathbb{N}_0$ is given by $Z(r_i) = j $ if $r_i \in \mathcal{R}_j$.

\begin{definition}[system Z inference, \boldmath{$\nmableit_\R^Z$} \cite{Pearl1990,GoldszmidtPearl1996}]
\label{systemz_consequence_relations}
Let \(\R\) be a knowledge base and let $A$, $B$ be formulas.
We say that
$B$  is a \emph{system Z inference of $A$ in the context of \R},
denoted by $A\nmableit_\R^Z B$, iff $A\nmableit^{\kappa^Z} B$ holds.
\end{definition}

Among the OCF models of \R, c-rep\-res\-en\-tations are special models obtained by
assigning an individual impact to each conditional and generating 
the world ranks as the sum of impacts of falsified conditionals.
For an in-depth introduction to c-representations and their use of the principle of conditional preservation ensured by respecting conditional structures, we refer to \cite{KernIsberner2001habil,KernIsberner2004amai}.
The central definition is the following:

 \begin{definition}[c-representation \cite{KernIsberner2001habil,KernIsberner2004amai}]\label{def:c-representation}
  A \emph{c-representation} of  a knowledge base $\R$ is a ranking function $\kappa_{\kappaiminusVektor}$ constructed from $\kappaiminusVektor = (\eta_1\, , \ldots \, , \eta_n) $ with integer impacts $\kappaiminus{i}\in\mathbb{N}_0 \, , i \in \{1\, , \ldots \, , n \}$ assigned to each conditional $\satzCL{B_i}{A_i}$ such that $\kappa$ accepts \R\ and is given by: 
  \begin{align}\label{form:def:c-representation}
   \kappa_{ \kappaiminusVektor }(\omega) 
=\sum\limits_{\substack{1 \leq i \leq n\\\omega\models A_i\ol{B}_i}}\kappaiminus{i}
  \end{align}
We will denote the set of all c-representations of $\R$ by $\mathcal{O}(CR(\R )) $.
\end{definition}

As every ranking model of \R, each c-representation
 $\kappa_{ \kappaiminusVektor }$ gives rise to an inference relation according to \eqref{eq_kappa_entailment}. 
While for each consistent \R, the system Z ranking function $\kappa^Z$ is uniquely determined, there may be many different c-representations of \R.
C-inference  \cite{BeierleEichhornKernIsberner2016FoIKSshort,BeierleEichhornKernIsbernerKutsch2018AMAI} is an inference relation taking all c-representations of \R\ into
account.

\begin{definition}[c-inference, \boldmath{$\nmableit_\R^c$} \cite{BeierleEichhornKernIsberner2016FoIKSshort,BeierleEichhornKernIsbernerKutsch2018AMAI}]
\label{def_consequence_relations}
Let \(\R\) be a knowledge base and let $A$, $B$ be formulas.
$B$ is \emph{a (skeptical) c-inference from $A$ in the context of \R}, denoted by $A\nmableit_\R^c B$, iff $A\nmableit^\kappa B$ holds for all c-representations \(\kappa\) for \(\R\).
\end{definition}

In \cite{BeierleEichhornKernIsbernerKutsch2018AMAI}
a modeling of c-representations as solutions of a constraint satisfaction problem \cspR\ 
is given and shown to be sound and complete with respect to the set of
all c-representations of \R.

\begin{definition}[\boldmath{\cspR} \cite{BeierleEichhornKernIsberner2016FoIKSshort,BeierleEichhornKernIsbernerKutsch2018AMAI}]
\label{def_csp_fuer_r}
Let
\(
     \R = \{\satzCL{B_1}{A_1},\ldots,\satzCL{B_n}{A_n}\}
\).
The constraint satisfaction problem for c-representations of \R,
denoted by \cspR, 
on the constraint variables 
    \(\{\kappaiminus{1}, \ldots, \kappaiminus{n}\}\)
ranging over $\mathbb{N}_0$
is given by the conjunction of the constraints,
for all \(i \in \{1,\ldots,n\}\):
\begin{align}
\label{eq_kappaiminus_positive}
&
\kappaiminus{i} \geq 0
\\
\label{eq_kappa_accepts_r_with_kappaiminus}
&
\kappaiminus{i}  >  
   \min_{\omega \models A_i B_i}
           \sum_{\substack{j \neq i \\ \omega \models A_j \ol{B_j}}} \kappaiminus{j} 
    - 
   \min_{\omega \models A_i \notB_i}
           \sum_{\substack{j \neq i \\ \omega \models A_j \ol{B_j}}} \kappaiminus{j} 
\end{align}
\end{definition}

A solution of \cspR\ is an \(n\)-tuple
\(
     (\kappaiminus{1}, \ldots,  \kappaiminus{n}) \in \mathbb{N}_0^n .
\) 
For a constraint satisfaction problem \(\mathit{CSP}\), the set of solutions is denoted by \(\solutionsOf{\mathit{CSP}}\).
Thus, with \solutionsR\ we denote the set of all solutions of \cspR.

\begin{proposition}[soundness and completeness of \cspR\ \cite{BeierleEichhornKernIsberner2016FoIKSshort,BeierleEichhornKernIsbernerKutsch2018AMAI}]\label{prop_modeling_cr}
Let $\mathcal{R} = \{\satzCL{B_1}{A_1}, \ldots, \satzCL{B_n}{A_n}  \}$
  be a knowledge base.
With $\kappa_{\kappaiminusVektor} $ as in~(\ref{form:def:c-representation}), we then have:
\begin{equation}\label{eq_modeling_cr}
\mathcal{O}(CR(\mathcal{R})) = \{ \kappa_{\kappaiminusVektor}  \mid \,  \kappaiminusVektor \in Sol(CR(\mathcal{R}))  \}
\end{equation}
\end{proposition}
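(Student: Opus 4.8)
The plan is to prove the set identity in \eqref{eq_modeling_cr} by establishing, for every impact vector $\kappaiminusVektor \in \mathbb{N}_0^n$, the pointwise equivalence
\[
\kappa_{\kappaiminusVektor} \akzeptiert \R \quad\Longleftrightarrow\quad \kappaiminusVektor \in \solutionsOf{\cspR}.
\]
Once this is in place, both inclusions follow immediately: by Definition~\ref{def:c-representation} the left-hand side of \eqref{eq_modeling_cr} collects exactly those ranking functions $\kappa_{\kappaiminusVektor}$ built from some $\kappaiminusVektor \in \mathbb{N}_0^n$ via \eqref{form:def:c-representation} that in addition accept $\R$, while the right-hand side collects those $\kappa_{\kappaiminusVektor}$ whose $\kappaiminusVektor$ satisfies the constraints of \cspR.

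The heart of the argument is a per-conditional computation. Fixing $i$, I would expand the two relevant ranks using the sum representation \eqref{form:def:c-representation}. For a verifying world $\omega \akzeptiert A_i B_i$ the $i$-th conditional is \emph{not} falsified, so the index $i$ never contributes to the defining sum, giving
\[
\kappa_{\kappaiminusVektor}(A_i B_i) = \min_{\omega \akzeptiert A_i B_i} \sum_{\substack{j \neq i\\ \omega \akzeptiert A_j \ol{B_j}}} \kappaiminus{j}.
\]
For a falsifying world $\omega \akzeptiert A_i \ol{B_i}$ the $i$-th conditional \emph{is} falsified, so $\kappaiminus{i}$ is a constant summand that factors out of the minimization:
\[
\kappa_{\kappaiminusVektor}(A_i \ol{B_i}) = \kappaiminus{i} + \min_{\omega \akzeptiert A_i \ol{B_i}} \sum_{\substack{j \neq i\\ \omega \akzeptiert A_j \ol{B_j}}} \kappaiminus{j}.
\]
Substituting both into the acceptance criterion $\kappa_{\kappaiminusVektor}(A_i B_i) < \kappa_{\kappaiminusVektor}(A_i \ol{B_i})$ and isolating $\kappaiminus{i}$ yields precisely constraint \eqref{eq_kappa_accepts_r_with_kappaiminus}; conversely, that constraint rearranges back into the acceptance inequality. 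Ranging over all $i$ then gives $\kappa_{\kappaiminusVektor} \akzeptiert \R$ iff every constraint \eqref{eq_kappa_accepts_r_with_kappaiminus} holds, whereas the nonnegativity constraints \eqref{eq_kappaiminus_positive} are automatic from $\kappaiminusVektor \in \mathbb{N}_0^n$.

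Two points require care to turn this equivalence into the stated identity, and I expect the second to be the only genuine obstacle. First, membership in the left-hand side demands that $\kappa_{\kappaiminusVektor}$ be a \emph{bona fide} OCF, i.e.\ normalized with $\kappa_{\kappaiminusVektor}^{-1}(0) \neq \emptyset$; I would discharge this by observing that for consistent $\R$ there is a world falsifying no conditional, since by \eqref{eq_system_z} these worlds are exactly the rank-$0$ worlds of $\kappa^Z$, whose existence is guaranteed because $\kappa^Z$ is an OCF. Such a world has an empty defining sum and hence rank $0$ under every $\kappa_{\kappaiminusVektor}$, securing normalization. Second, the degenerate inconsistent case must be handled separately: there the left-hand side is empty, and one checks that $\solutionsOf{\cspR} = \emptyset$ as well, because any solution would, after subtracting its minimal rank (which preserves all strict acceptance inequalities), yield a ranking model of $\R$ and thereby contradict inconsistency. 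With normalization and this consistency bookkeeping settled, the core equivalence is a direct unfolding of \eqref{form:def:c-representation} together with the acceptance condition, and \eqref{eq_modeling_cr} follows.
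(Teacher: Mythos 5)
The paper itself gives no proof of this proposition --- it is imported verbatim from the cited works \cite{BeierleEichhornKernIsberner2016FoIKSshort,BeierleEichhornKernIsbernerKutsch2018AMAI} --- and your argument is correct and is essentially the standard one from those sources: unfold the sum representation \eqref{form:def:c-representation}, note that $\kappaiminus{i}$ never occurs in the minimum over verifying worlds but factors out of the minimum over falsifying worlds, and rearrange the acceptance inequality into constraint \eqref{eq_kappa_accepts_r_with_kappaiminus}, with \eqref{eq_kappaiminus_positive} automatic from $\kappaiminusVektor \in \mathbb{N}_0^n$. Your two side remarks (normalization via a world falsifying no conditional, and emptiness of both sides when $\R$ is inconsistent) are sound and correctly fill in the bookkeeping that turns the per-conditional equivalence into the stated set identity.
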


\begin{example}[$\mathcal{R}_{\text{bird}}$]\label{exa:bird}
To illustrate the definitions and concepts presented in this paper let us consider an instance of the well known penguin bird example. This example will be a running example and will be continued and extended throughout the paper.
Consider the propositional alphabet $\Sigma = \{p \, , b \, , f \}$ representing whether something is a penguin $(p)$, whether it is a bird $(b)$, or whether it can fly $(f)$.
Thus, the set of worlds is 
\(
\Omega = \{ p \,b \,f \, , p \, b\, \overline{f} \, , p \, \overline{b} \, f \, , p \, \overline{b} \, \overline {f} \, , \overline{p} \, b \, f \, , \overline{p}\, b \,\overline{f} \, , \overline{p}\, \overline{b} \,f   \, ,\overline{p} \, \overline{b} \, \overline{f}   \}
\).
The knowledge base $\mathcal{R}_{\text{bird}} = \{r_1\, , r_2\, , r_3 \, , r_4 \}$ contains the conditionals
\begin{align*}
r_1 = (f | b) \quad & \text{"Birds usually fly"}   , \\
r_2 = (\overline{f} | p) \quad & \text{"Penguins usually do not fly"}  , \\
r_3 = (\overline{f} | bp) \quad & \text{"Penguins which are also birds usually do not fly"}  , \\
r_4 = (b | p) \quad & \text{"Penguins are usually birds"}  .
\end{align*}
For $\R_0=\{\satzCL{f}{b}\}$ and $\R_1=\Rbirds\setminus\R_0$ we have the ordered partitioning $(\R_0,\R_1)$ such that every conditional in $\R_0$ is tolerated by $\R_0\cup\R_1=\Rbirds$ and every conditional in $\R_1$ is tolerated by $\R_1$.
For instance, $\satzCL{f}{b}$ is tolerated by $\Rbirds$ since there is, for example, the world $\ol{p}bf$ with $\ol{p}bf\models bf$ as well as $\ol{p}bf\models(p\Rightarrow\ol{f})\wedge(pb\Rightarrow\ol{f})\wedge(p\Rightarrow b)$. Furthermore $(\mathcal{R}_0 \, , \mathcal{R}_1)$ is indeed the inclusion-maximal partition of $\mathcal{R}$. Therefore, $\R$ is consistent.
An OCF $\kappa$ that accepts $\Rbirds$ is:
\begin{center}
\newcommand{\abstandSSS}{\hspace{3mm}}  
\begin{tabular}{c@{\abstandSSS}c@{\abstandSSS}c@{\abstandSSS}c@{\abstandSSS}c@{\abstandSSS}c@{\abstandSSS}c@{\abstandSSS}c@{\abstandSSS}c}
\toprule
$\omega$&
$\mathit{p\,b\,f}$&
$\mathit{p\,b\,\ol{f}}$&
$\mathit{p\,\ol{b}\,f}$&
$\mathit{p\,\ol{b}\,\ol{f}}$&
$\mathit{\ol{p}\,b\,f}$&
$\mathit{\ol{p}\,b\,\ol{f}}$&
$\mathit{\ol{p}\,\ol{b}\,f}$&
$\mathit{\ol{p}\,\ol{b}\,\ol{f}}$
\\\midrule
$\kappa(\omega)$&$2$&$1$&$2$&$2$&$0$&$1$&$0$&$0$\\
\bottomrule
\end{tabular}
\end{center}
For instance,
we have $\kappa\models\satzCL{f}{b}$ since
$\kappa(bf)=\min\{\kappa(pbf),\, \kappa(\ol{p}bf)\}=\min\{2,0\}=0$ and 
$\kappa(b\ol{f})=\min\{\kappa(pb\ol{f}),\, \kappa(\ol{p}b\ol{f})\}=\min\{1,1\}=1$ and therefore $\kappa(bf)<\kappa(b\ol{f})$.
\end{example}

\section{Preferred Structure on Worlds}\label{sec_sz_structure}

Aiming at developing a nonmonotonic inference relation combining the advantages of system Z and skeptical c-inference, we first introduce the new notion of \namePS on
worlds with respect to a knowledge base \R.
  The idea is to take into account both the tolerance information expressed by the 
ordered partition of \R\ and the structural information which conditionals are falsified.

\begin{definition}[$\falsW^j$, $\falsW$, \namePS \leSZ\ on worlds]\label{def_sz_structure_worlds}
  Consider a consistent knowledge base $\mathcal{R} = \{ r_i=(B_i|A_i) \mid i \in\{1, \ldots, n\} \}$
  with $\fctOP{\R} = \fctOPvector{\R}$.
For $j \in \{0,\ldots,k\}$, 
  $\falsW^j$ and $\falsW$
are the \emph{functions mapping worlds to the set of falsified conditionals}
  from the tolerance partition $\R_j$ and from \R, respectively, given by
\begin{align}
  \falsW^j (\omega) &:= \{ r_i \in \mathcal{R}_ j \mid \omega \models A_i \overline{B_i }  \},\label{eq_mapping_f_j} \\
  \label{eq_mapping_f}
  \falsW(\omega) &:=  \{ r_i \in \mathcal{R} \mid \omega \models A_i \overline{B_i }  \}.
\end{align}
The \emph{\namePS on worlds} is given by the binary relation
$\leSZ \subseteq \Omega \times \Omega$ 
defined by, for any $\omega \, , \omega' \in \Omega$,
\begin{align}
\nonumber
  \omega \leSZ \omega'  \mbox{\, \ \ iff\, \ \ } &\textrm{there exists $m \in \{0\, , \ldots \, , k \}$ such that }\\
  \label{eq_sz_structure}
  &\falsW^i(\omega)  = \falsW^i(\omega') \quad \forall i  \in  \{  m + 1 \, , \ldots \, , k  \}, \,\,\textrm{and}\\
\nonumber
            & \falsW^m(\omega) \subsetneqq  \falsW^m(\omega') \, . 
\end{align}
\end{definition}

Thus, $\omega \leSZ \omega'$ if and only if $\omega $ falsifies strictly less conditionals than $\omega'$ in the partition with the biggest index $m$  where the conditionals falsified by $\omega $ and $\omega'$ differ. 
The \namePS on worlds
will be the basis for defining a new inference relation induced by \R.
Before formally defining this new inference relation and elaborating its properties, we proceed by illustrating
the \namePS on worlds for a knowledge base \R, relating it to c-representations of \R, and proving a set of its properties that
will be useful for investigating the characteristics and properties of
the resulting inference relation.

\begin{example}[\leSZarg{\Rbirds}]\label{ex_sz_order}
Let us determine the \namePS on worlds $\leSZarg{\Rbirds}$  for the knowledge base $\Rbirds$ from Example \ref{exa:bird}  whose verification/falsification behavior is shown in Table~\ref{tab:penguin_verification_and_falsification}.
The inclusion-maximal partition $\fctOP{\Rbirds} = (\R_0, \R_1)$ is given by
\(\mathcal{R}_0  = \{ r_1 = (f | b) \}\)
and
\(\mathcal{R}_1  =  \{ r_2 = (\overline{f} | p) \, , r_3 = (\overline{f} | bp) \, ,   r_4 = (b | p) \}\).
Figure~\ref{fig_preferred_structure_Rbirds} shows the \namePS on worlds $\leSZarg{\Rbirds}$  for the knowledge base $\Rbirds$.
An edge $ \omega \rightarrow \omega' $ between two worlds indicates that $\omega \leSZarg{\Rbirds} \omega'$.
  The full relation $\leSZarg{\Rbirds}$ is obtained from the transitive closure of $\rightarrow $ in Figure~\ref{fig_preferred_structure_Rbirds}.
\end{example}

\begin{table}[tb]
  \begin{center}
\begin{tabular}{l@{\qquad}c@{\quad}c@{\quad}c@{\quad}c@{\quad}c@{\quad}c@{\quad}c@{\quad}c}
  \toprule $\omega$
	&$\mathit{pbf}$&$\mathit{pb\ol{f}}$&$\mathit{p\ol{b}f}$&$\mathit{p\ol{b}\,\ol{f}}$&$\mathit{\ol{p}bf}$&$\mathit{\ol{p}b\ol{f}}$&$\mathit{\ol{p}\ol{b}f}$&$\mathit{\ol{p}\ol{b}\,\ol{f}}$\\
  \midrule
  $r_1=(f|b)$& \verify  &  \falsif  &  \neutra  &  \neutra  & \verify  &  \falsif  &  \neutra &  \neutra \\
  $r_2=(\ol{f}|p)$& \falsif  & \verify  & \falsif  & \verify  &  \neutra &   \neutra &  \neutra &  \neutra \\
  $r_3=(\ol{f}|pb)$& \falsif  & \verify  &  \neutra &  \neutra &  \neutra &  \neutra &  \neutra &  \neutra \\
  $r_4=(b|p)$& \verify  & \verify  & \falsif  & \falsif  &  \neutra &  \neutra &  \neutra &  \neutra \\
  \midrule
$\kappa^Z(\omega)$&$2$&$1$&$2$&$2$&$0$&$1$&$0$&$0$\\
\bottomrule
 \end{tabular}
 \end{center}
 \caption{Verification/falsification behavior of the knowledge base \Rbirds; (\verify) indicates verification, (\falsif) falsification, and (\neutra) non-applicability. The OCF $\kappa^Z$ is the ranking function obtained from $\Rbirds$ using system~Z.}
 \label{tab:penguin_verification_and_falsification}
\end{table}

\begin{figure}\centering
\begin{tikzpicture}
  \node (level4spalte3) at (0,0) {$p\ol{b}f$};
  \node [left of=level4spalte3] (level4spalte2) {};
  \node [left of=level4spalte2] (level4spalte1) {};
  \node [right of=level4spalte3] (level4spalte4) {};
  \node [right of=level4spalte4] (level4spalte5) {};
\node [below of=level4spalte1] (level3spalte1)  {$pbf$};
  \node [below of=level4spalte2] (level3spalte2)  {};
  \node [below of=level4spalte3] (level3spalte3)  {$p\ol{b}\,\ol{f}$};
  \node [below of=level4spalte4] (level3spalte4)  {};
  \node [below of=level4spalte5] (level3spalte5) {};
\node [below of=level3spalte1] (level2spalte1)  {};
  \node [below of=level3spalte2] (level2spalte2)  {$pb\ol{f}$};
  \node [below of=level3spalte3] (level2spalte3)  {};
  \node [below of=level3spalte4] (level2spalte4)  {$\ol{p}b\ol{f}$};
  \node [below of=level3spalte5] (level2spalte5)  {};
\node [below of=level2spalte1] (level1spalte1)  {$\ol{p}bf$};
  \node [below of=level2spalte2] (level1spalte2)  {};
  \node [below of=level2spalte3] (level1spalte3)  {$\ol{p}\ol{b}f$};
  \node [below of=level2spalte4] (level1spalte4)  {};
  \node [below of=level2spalte5] (level1spalte5) {$\ol{p}\ol{b} \, \ol{f}$};
\draw [black,  thick, ->] (level1spalte1) -- (level2spalte2);
  \draw [black,  thick, ->] (level1spalte1) -- (level2spalte4);
  \draw [black,  thick, ->] (level1spalte3) -- (level2spalte2);
  \draw [black,  thick, ->] (level1spalte3) -- (level2spalte4);
  \draw [black,  thick, ->] (level1spalte5) -- (level2spalte4);
  \draw [black,  thick, ->] (level1spalte5) -- (level2spalte2);
\draw [black,  thick, ->] (level2spalte2) -- (level3spalte1);
  \draw [black,  thick, ->] (level2spalte2) -- (level3spalte3);
  \draw [black,  thick, ->] (level2spalte4) -- (level3spalte1);
  \draw [black,  thick, ->] (level2spalte4) -- (level3spalte3);
\draw [black,  thick, ->] (level3spalte3) -- (level4spalte3);
\end{tikzpicture}
\caption{The \namePS  relation  \leSZarg{\Rbirds} on worlds for the knowledge base \Rbirds.}
    \label{fig_preferred_structure_Rbirds}
\end{figure}

The following proposition can be seen as a generalization of a result from \cite{BeierleKutsch2019AppliedIntelligence}.
It extends \cite[Proposition 15]{BeierleKutsch2019AppliedIntelligence} to the relation $\leSZ$ and to arbitrary knowledge bases, not just knowledge bases  only consisting of conditional facts  as in  \cite[Proposition 15]{BeierleKutsch2019AppliedIntelligence}.
It tells us that the set of c-representations is rich enough to guarantee the existence of a particular c-representation $ \kappa_{ \kappaiminusVektor } \in \mathcal{O}(\cspR)$ fulfilling
the ordering constraints given in the proposition.

\begin{proposition}\label{lemma_technical_sz}
Let $\mathcal{R} = \{ r_i = (B_i|A_i) \mid i  =1 \, , \ldots \, , n  \}$ be a consistent knowledge base, let $\omega' \in \Omega  $ and let $\Omega_V \subseteq \Omega $. Assume that  $ \omega \not \leSZ \omega ' $ for all $\omega \in \Omega_V$.
Then there exists a solution $ \kappaiminusVektor \in \solutionsR$ and thus a c-representation, 
 $\kappa_{\kappaiminusVektor} \in \mathcal{O}(CR(\mathcal{R}))$ 
such that, for all $\omega \in \Omega_V$, we have:
\begin{equation}\label{eq_technical_sq}
\kappa_{ \kappaiminusVektor }(\omega') \leq \kappa_{ \kappaiminusVektor }(\omega)
\end{equation}
\end{proposition}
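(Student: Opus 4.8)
The plan is to prove the statement constructively: I will exhibit an explicit impact vector $\kappaiminusVektor=(\kappaiminus{1},\ldots,\kappaiminus{n})$, show that it lies in $\solutionsR$ (so that by Proposition~\ref{prop_modeling_cr} the induced $\kappa_{\kappaiminusVektor}$ is a c-representation), and then verify \eqref{eq_technical_sq} directly. The construction will depend only on $\omega'$, not on the individual $\omega\in\Omega_V$, so a single impact vector will handle the whole set $\Omega_V$ at once.

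First I would unfold the hypothesis $\omega\not\leSZ\omega'$. Writing $m^{*}$ for the largest index with $\falsW^{m^{*}}(\omega)\neq\falsW^{m^{*}}(\omega')$ (if no such index exists then $\falsW(\omega)=\falsW(\omega')$ and \eqref{eq_technical_sq} holds with equality for any impacts), the definition of $\leSZ$ shows that $\omega\not\leSZ\omega'$ forces $\falsW^{m^{*}}(\omega)\not\subseteq\falsW^{m^{*}}(\omega')$; hence there is a witness conditional $r_{s}\in\R_{m^{*}}$ that is falsified by $\omega$ but not by $\omega'$. Setting $F:=\falsW(\omega')$ and $\ol{F}:=\R\setminus F$, and writing the rank difference as $\kappa_{\kappaiminusVektor}(\omega)-\kappa_{\kappaiminusVektor}(\omega')=\sum_{r_{l}\in\falsW(\omega)\setminus F}\kappaiminus{l}-\sum_{r_{l}\in F\setminus\falsW(\omega)}\kappaiminus{l}$, I record two structural facts: $r_{s}$ appears in the positive sum (so $r_{s}\in\ol{F}\cap\R_{m^{*}}$), and since $\falsW^{i}(\omega)=\falsW^{i}(\omega')$ for all $i>m^{*}$, every conditional in the negative sum lies in $\R_{0}\cup\cdots\cup\R_{m^{*}}$, with its $\R_{m^{*}}$-part contained in $F\cap\R_{m^{*}}$. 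Thus it suffices to make the single impact $\kappaiminus{s}$ exceed the total impact of the lower partitions plus that of the $F$-conditionals of partition $m^{*}$.

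This motivates the construction, carried out partition by partition for $j=0,\ldots,k$. Let $S_{j}$ denote the sum of all impacts already assigned to conditionals in $\R_{0}\cup\cdots\cup\R_{j-1}$ (so $S_{0}=0$). Inside $\R_{j}$ I assign the small value $\kappaiminus{i}:=S_{j}+1$ to every $r_{i}\in F\cap\R_{j}$, and then the larger value $\kappaiminus{i}:=S_{j}+\sum_{r_{l}\in F\cap\R_{j}}\kappaiminus{l}+1$ to every $r_{i}\in\ol{F}\cap\R_{j}$. To see that $\kappaiminusVektor\in\solutionsR$, I use tolerance: since $\R_{j}$ is tolerated by $\R_{j}\cup\cdots\cup\R_{k}$, each $r_{i}\in\R_{j}$ has a verifying world falsifying no conditional of $\R_{j}\cup\cdots\cup\R_{k}$, hence only conditionals of the lower partitions; therefore the first minimum in \eqref{eq_kappa_accepts_r_with_kappaiminus} is at most $S_{j}$ and the subtracted minimum is nonnegative, so the whole right-hand side is at most $S_{j}$. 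As every impact assigned in $\R_{j}$ is at least $S_{j}+1$, constraint~\eqref{eq_kappa_accepts_r_with_kappaiminus} holds for all $i$, and $\kappa_{\kappaiminusVektor}\in\mathcal{O}(\cspR)$.

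Finally I would check \eqref{eq_technical_sq} for $\omega\in\Omega_V$ with $\falsW(\omega)\neq\falsW(\omega')$. The negative sum is bounded by $S_{m^{*}}$ on its part below $\R_{m^{*}}$ and by $\sum_{r_{l}\in F\cap\R_{m^{*}}}\kappaiminus{l}$ on its part inside $\R_{m^{*}}$, hence by $S_{m^{*}}+\sum_{r_{l}\in F\cap\R_{m^{*}}}\kappaiminus{l}$; the witness impact $\kappaiminus{s}=S_{m^{*}}+\sum_{r_{l}\in F\cap\R_{m^{*}}}\kappaiminus{l}+1$ strictly exceeds this bound, and since $r_{s}$ alone already contributes $\kappaiminus{s}$ to the positive sum, the rank difference is positive, giving $\kappa_{\kappaiminusVektor}(\omega')\leq\kappa_{\kappaiminusVektor}(\omega)$. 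I expect the main obstacle to be exactly the case in which $\falsW^{m^{*}}(\omega)$ and $\falsW^{m^{*}}(\omega')$ are incomparable (so that $\omega$ may falsify fewer conditionals in $\R_{m^{*}}$ than $\omega'$ without being $\leSZ$-below it): a uniform impact per partition then fails, and the crux of the construction is the within-partition split that forces the single witness in $\ol{F}\cap\R_{m^{*}}$ to outweigh all of $F\cap\R_{m^{*}}$.
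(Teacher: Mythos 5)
Your proof is correct and follows essentially the same route as the paper's (sketched) proof: you construct an impact vector in which every impact assigned in $\R_j$ exceeds the total impact of all lower partitions --- exactly the paper's condition \eqref{Un.E11}, its statement (i) --- verify solutionhood via the tolerance property of the inclusion-maximal partition, and then choose the impacts so that \eqref{eq_technical_sq} holds, which is the paper's statement (ii). Your explicit within-partition split between the conditionals in $\falsW(\omega')$ and the rest (so that a single witness conditional in $\R_{m^{*}}$ outweighs the lower partitions plus all $\falsW(\omega')$-conditionals of $\R_{m^{*}}$) is a sound, concrete realization of the choice that the paper only asserts and defers to its full version.
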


\begin{proof} \emph{(Sketch)} Due to lack of space, we give a sketch of the proof.
The claim follows by combining the following two statements:
\begin{itemize}
\item[(i)] If $ \eta_i \in \mathbb{N}  \, , i \in \{1\, , \ldots \, , n \}  $, satisfy
\begin{equation}\label{Un.E11}
\eta _i > \sum_{ \substack{ j \in \{1 , \ldots  , n \}  \\ r_j \in \bigcup_{l=0}^{m-1} \mathcal{R}_l}  } \eta_j
\end{equation}
for all $i \in \{ 1  \, , \ldots \, , n \} $ where $m=m(i)\in \{0\, , \ldots \, , k \}$ with $r_i \in \mathcal{R}_m $ then $\kappaiminusVektor = (\eta_1 \, , \ldots \, , \eta_n )$ is a solution of $CR(\mathcal{R})$ and so $\kappa_{\kappaiminusVektor} $ defined as in~(\ref{form:def:c-representation}) is a c-representation of $\R$.
\item[(ii)]
Because of  $ \omega \not \leSZ \omega ' $ for all $\omega \in \Omega_V$ we can choose $\kappaiminusVektor = (\eta_1 \, , \ldots \, , \eta_n ) $
satisfying~(\ref{Un.E11})
  such that
$\kappa_{\kappaiminusVektor} $ defined as in~(\ref{form:def:c-representation}) satisfies ~(\ref{eq_technical_sq}) for all $\omega \in \Omega_V$.
\end{itemize}
A complete proof that (i), (ii)
hold is given in the full version of this paper.
\MEINqed
\end{proof}

The rest of this section is dedicated to the investigation of further properties of the relation $\leSZ$.  Let us start with a lemma that tells us that worlds falsifying the same sets of conditionals are equivalent with respect to $\leSZ$.

\begin{lemma}\label{lemma_cond_invariance}
Let $ \R = \{ (B_i|A_i) \mid i =1, \ldots, n \}$ be a knowledge base, and
let $\omega_1 \, , \omega_2 \in \Omega $ falsify the same sets of conditionals, i.e., for all $i \in \{1\, , \ldots \, , n \}$, we have
\(\omega_1 \models A_i \overline{B}_i\)
iff
\(\omega_2 \models A_i \overline{B}_i\).
Then $\omega_1 \,, \omega_2 $ behave exactly the same way with respect to $\mathcal{R}$,
i.e., for all $\omega \in \Omega$, the following equivalences hold:
\begin{align*}
\omega \leSZ \omega_1  \iff \omega  \leSZ \omega_2 \, , \\
\omega_1 \leSZ \omega  \iff \omega_2  \leSZ \omega \, .
\end{align*} 
\end{lemma}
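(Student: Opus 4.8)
The plan is to reduce the whole statement to one observation: the relation $\leSZ$ depends on each of its two arguments only through the tuple of falsification sets $(\falsW^0(\cdot), \ldots, \falsW^k(\cdot))$, and $\omega_1$ and $\omega_2$ induce identical such tuples. First I would restate the hypothesis: by \eqref{eq_mapping_f}, saying that $\omega_1$ and $\omega_2$ falsify the same conditionals of \R\ is exactly the equality $\falsW(\omega_1) = \falsW(\omega_2)$. Comparing \eqref{eq_mapping_f_j} with \eqref{eq_mapping_f} gives the bookkeeping identity $\falsW^j(\omega) = \falsW(\omega) \cap \R_j$ for every $\omega \in \Omega$ and every $j \in \{0, \ldots, k\}$, since $\R_j \subseteq \R$ and both definitions impose the same membership test $\omega \models A_i \overline{B}_i$. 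Intersecting $\falsW(\omega_1) = \falsW(\omega_2)$ with each $\R_j$ then yields $\falsW^j(\omega_1) = \falsW^j(\omega_2)$ for all $j$.

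From there I would read off both equivalences directly from Definition~\ref{def_sz_structure_worlds}. For the first, I fix an arbitrary $\omega \in \Omega$ and unfold $\omega \leSZ \omega_1$ via \eqref{eq_sz_structure}: it asserts some $m$ with $\falsW^i(\omega) = \falsW^i(\omega_1)$ for all $i > m$ and $\falsW^m(\omega) \subsetneqq \falsW^m(\omega_1)$. Because $\falsW^i(\omega_1) = \falsW^i(\omega_2)$ for every $i$, each occurrence of $\omega_1$ on the right of these conditions may be replaced by $\omega_2$ without changing their truth, so the same witness $m$ certifies $\omega \leSZ \omega_2$; the reverse substitution gives the converse, proving $\omega \leSZ \omega_1 \iff \omega \leSZ \omega_2$. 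The second equivalence $\omega_1 \leSZ \omega \iff \omega_2 \leSZ \omega$ is handled identically, now substituting in the first argument of \eqref{eq_sz_structure}, and I would just note the symmetry rather than repeat the computation.

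I do not expect any genuine obstacle here: the lemma is a congruence property of $\leSZ$ with respect to the equivalence ``falsifies the same conditionals'', and the only point needing care is the identity $\falsW^j(\omega) = \falsW(\omega) \cap \R_j$, which is precisely what lets me pass from equality of the global falsification sets to equality of all partition-wise falsification sets. Once that is recorded, everything else is a purely syntactic substitution into \eqref{eq_sz_structure}.
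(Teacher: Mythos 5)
Your proof is correct and follows essentially the same route as the paper: the paper's own proof consists precisely of the observation that $\falsW^i(\omega_1) = \falsW^i(\omega_2)$ for all $i \in \{0, \ldots, k\}$, from which the equivalences follow by substitution into \eqref{eq_sz_structure}. You have merely spelled out the two steps the paper leaves implicit, namely the identity $\falsW^j(\omega) = \falsW(\omega) \cap \R_j$ and the syntactic substitution argument.
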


\begin{proof}
The claim follows from $\falsW^i(\omega_1) = \falsW^i(\omega_2)$ for all $i \in \{0 \, , \ldots \, , k \}$.
\MEINqed
\end{proof}

In general, the relation $\leSZ$ cannot be obtained from a ranking function. 
\begin{lemma}\label{lemma_no_ranking_inference}
There exists a knowledge base $\R$ such that there is no ranking function $\kappa : \Omega \to \mathbb{N}_0^{\infty} $
  with \(\omega_1 \leSZ \omega_2\) iff \(\kappa(\omega_1) < \kappa(\omega_2)\).  
\end{lemma}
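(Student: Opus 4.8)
The plan is to argue by contradiction, using the fact that any relation of the form $\omega_1 \mapsto \big(\kappa(\omega_1) < \kappa(\omega_2)\big)$ is the strict part of the total preorder induced by $\kappa$ and must therefore be a \emph{strict weak order}. In particular, the associated incomparability relation (neither $\omega_1 < \omega_2$ nor $\omega_2 < \omega_1$), which for such a $\kappa$ just amounts to $\kappa(\omega_1) = \kappa(\omega_2)$, is transitive; equivalently, the relation is negatively transitive. Hence it suffices to exhibit a single knowledge base \R\ for which the incomparability relation induced by \leSZ\ fails to be transitive, since then no representing $\kappa$ can exist.

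For the witness I would simply reuse the running example \Rbirds, whose falsification behaviour and inclusion-maximal partition $\fctOP{\Rbirds} = (\R_0, \R_1)$ with $\R_0 = \{r_1\}$ and $\R_1 = \{r_2, r_3, r_4\}$ are already recorded in Table~\ref{tab:penguin_verification_and_falsification}. Consider the three worlds $\omega_a = p\ol{b}\,\ol{f}$, $\omega_b = pbf$, and $\omega_c = p\ol{b}f$. Reading off the table, all three falsify nothing in $\R_0$, while at the top layer $\R_1$ we get $\falsW^1(\omega_a) = \{r_4\}$, $\falsW^1(\omega_b) = \{r_2, r_3\}$, and $\falsW^1(\omega_c) = \{r_2, r_4\}$. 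Thus, by \eqref{eq_sz_structure} with $m = 1$, we have $\omega_a \leSZarg{\Rbirds} \omega_c$ because $\{r_4\} \subsetneq \{r_2, r_4\}$, whereas $\omega_b$ is incomparable to both $\omega_a$ and $\omega_c$: since the two worlds in each pair already coincide at layer $0$, the only candidate is $m = 1$, and there the set $\{r_2,r_3\}$ is neither a proper subset nor a proper superset of $\{r_4\}$ or of $\{r_2,r_4\}$, so $\omega_a \not\leSZarg{\Rbirds} \omega_b$, $\omega_b \not\leSZarg{\Rbirds} \omega_a$, $\omega_b \not\leSZarg{\Rbirds} \omega_c$, and $\omega_c \not\leSZarg{\Rbirds} \omega_b$. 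These (non-)relationships can also be read off directly from Figure~\ref{fig_preferred_structure_Rbirds}.

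To finish, suppose toward a contradiction that some ranking function $\kappa$ satisfied $\omega_1 \leSZarg{\Rbirds} \omega_2$ iff $\kappa(\omega_1) < \kappa(\omega_2)$. From $\omega_a \leSZarg{\Rbirds} \omega_c$ we obtain $\kappa(\omega_a) < \kappa(\omega_c)$. From the incomparability of $\omega_b$ and $\omega_c$ we obtain $\kappa(\omega_b) = \kappa(\omega_c)$, and from the incomparability of $\omega_a$ and $\omega_b$ we obtain $\kappa(\omega_a) = \kappa(\omega_b)$. Chaining these two equalities yields $\kappa(\omega_a) = \kappa(\omega_c)$, contradicting $\kappa(\omega_a) < \kappa(\omega_c)$, which completes the argument.

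I expect no serious obstacle here: the verification/falsification values are a direct lookup in Table~\ref{tab:penguin_verification_and_falsification}, and the inequalities are immediate. The only conceptual step is recognizing the abstract property that rules out a ranking representation, namely transitivity of incomparability (equivalently, negative transitivity of \leSZ), and then spotting three worlds that violate it. The reason such a witness exists at all is structural: restricted to a single tolerance layer, \leSZ\ orders worlds by set inclusion of their falsified conditionals, and an inclusion order containing a comparable pair together with an element incomparable to both (here $\{r_4\} \subsetneq \{r_2,r_4\}$ with $\{r_2,r_3\}$ incomparable to each) is in general not a strict weak order. Consequently the specific choice of \Rbirds\ is inessential; any knowledge base realizing this inclusion pattern within one layer would serve equally well.
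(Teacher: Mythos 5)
Your proof is correct and takes essentially the same approach as the paper: the paper's proof also uses \Rbirds\ with exactly the same three worlds ($p\,b\,f$, $p\,\ol{b}\,f$, $p\,\ol{b}\,\ol{f}$), deriving $\kappa(p\,b\,f) = \kappa(p\,\ol{b}\,f)$ and $\kappa(p\,b\,f) = \kappa(p\,\ol{b}\,\ol{f})$ from the two incomparable pairs and then contradicting $p\,\ol{b}\,\ol{f} \leSZarg{\Rbirds} p\,\ol{b}\,f$. Your explicit computation of the falsification sets and the framing via failure of negative transitivity are sound additions, but the witness and the argument coincide with the paper's.
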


\begin{proof}
The proof is by contradiction. Assume there is a ranking function $\kappa: \Omega \to \mathbb{N}_0^{\infty} $ with \(\omega_1 \leSZ \omega_2\) iff \(\kappa(\omega_1) < \kappa(\omega_2)\) for \Rbirds.
For $\leSZ$ (cf.\ Figure~\ref{fig_preferred_structure_Rbirds}) we have
$p \, b \, f  \not \leSZarg{\Rbirds} p \, \overline{b} \, f $ and $p \, \overline{b} \, f  \not \leSZarg{\Rbirds} p \, b \, f $
and furthermore
$p \, b \, f  \not \leSZarg{\Rbirds} p \, \overline{b} \, \overline{f} $ and $ p \, \overline{b} \, \ol{f}  \not \leSZarg{\Rbirds} p \, b \, f $.
Therefore, we obtain 
\(
\kappa(p \, b \, f ) = \kappa( p \, \overline{b} \, f)
\)
and
\(
\kappa(p \, b \, f  =  \kappa(p \, \overline{b} \, \overline{f})
\).
Thus, $\kappa( p \, \overline{b} \, f) = \kappa(p \, \overline{b} \, \overline{f})$ which is a contradiction to $ p \, \overline{b} \, \overline{f} \leSZarg{\Rbirds} p  \, \overline{b} \, f$.
\MEINqed
\end{proof}

Let us end this subsection by proving that $\leSZ$ defines a strict partial order.
\begin{lemma}\label{lemma_inference_transitive}
The relation $\leSZ$ is irreflexive, antisymmetric and transitive, meaning that  $\leSZ $ is a strict partial order.
\end{lemma}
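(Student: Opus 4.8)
The plan is to verify the three defining properties of a strict partial order directly from Definition~\ref{def_sz_structure_worlds}, working with the family of falsification sets $\falsW^0(\omega),\ldots,\falsW^k(\omega)$ attached to each world. The key observation driving all three arguments is that $\omega \leSZ \omega'$ is governed by the largest index $m$ at which $\falsW^m(\omega)$ and $\falsW^m(\omega')$ disagree: above that index the sets coincide, and at that index we have strict set inclusion $\falsW^m(\omega) \subsetneqq \falsW^m(\omega')$. So the relation behaves like a lexicographic comparison (read from the top index $k$ downward) of the vector of falsification sets, where each coordinate is compared by the strict-subset partial order. It is well known that a lexicographic combination of strict partial orders is again a strict partial order, and I would essentially make that abstract fact concrete here.

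First I would handle \emph{irreflexivity}. If $\omega \leSZ \omega$ held, then for the witnessing index $m$ we would need $\falsW^m(\omega) \subsetneqq \falsW^m(\omega)$, which is impossible since no set is a proper subset of itself. Next, \emph{antisymmetry}: suppose for contradiction that both $\omega \leSZ \omega'$ and $\omega' \leSZ \omega$, with witnessing indices $m$ and $m'$ respectively. I would argue that the two witnessing indices must coincide. Indeed, the witnessing index is uniquely determined as the largest $i$ with $\falsW^i(\omega) \neq \falsW^i(\omega')$, and this ``largest index of disagreement'' is symmetric in $\omega$ and $\omega'$, so $m = m'$. But then $\omega \leSZ \omega'$ forces $\falsW^m(\omega) \subsetneqq \falsW^m(\omega')$ while $\omega' \leSZ \omega$ forces $\falsW^m(\omega') \subsetneqq \falsW^m(\omega)$, contradicting each other.

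The main work is \emph{transitivity}. Assume $\omega \leSZ \omega'$ with witnessing index $m$ and $\omega' \leSZ \omega''$ with witnessing index $m'$; I must produce a witnessing index for $\omega \leSZ \omega''$. I would set $\mu := \max\{m, m'\}$ and show it (or, more robustly, the true largest index of disagreement between $\omega$ and $\omega''$) works. The clean way is to consider two cases according to whether $m = m'$ or $m \neq m'$. If $m > m'$, then for all $i > m$ we have $\falsW^i(\omega) = \falsW^i(\omega')$ and for all $i > m' $ we have $\falsW^i(\omega') = \falsW^i(\omega'')$; in particular at the index $m$ itself we get $\falsW^m(\omega') = \falsW^m(\omega'')$ (since $m > m'$), so $\falsW^m(\omega) \subsetneqq \falsW^m(\omega') = \falsW^m(\omega'')$, and the coincidence at indices above $m$ carries over by transitivity of equality; thus $m$ witnesses $\omega \leSZ \omega''$. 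The case $m < m'$ is symmetric with roles exchanged and witness $m'$. The remaining case $m = m'$ is the only slightly delicate one: here at the common index $m$ we have $\falsW^m(\omega) \subsetneqq \falsW^m(\omega') \subsetneqq \falsW^m(\omega'')$, so by transitivity of proper set inclusion $\falsW^m(\omega) \subsetneqq \falsW^m(\omega'')$, while equality at all indices above $m$ again follows from transitivity of equality; so $m$ witnesses the relation.

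The one subtlety I expect to be the main obstacle is making precise the claim that the witnessing index $m$ in Definition~\ref{def_sz_structure_worlds} is uniquely determined — namely that it is exactly the maximal index at which the two falsification-set families differ. This is what makes the symmetry argument in antisymmetry and the case analysis in transitivity go through cleanly, and I would state and use it explicitly (it is immediate from the defining conditions, since the equalities $\falsW^i(\omega) = \falsW^i(\omega')$ for $i > m$ together with the strict inclusion at $m$ pin down $m$). Once that uniqueness is in hand, all three verifications reduce to elementary facts about equality and proper inclusion of sets, so no heavy computation is needed.
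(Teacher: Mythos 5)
Your proposal is correct and follows essentially the same route as the paper's proof: both identify the witnessing index with the maximal index at which the falsification sets $\falsW^i$ disagree, and establish transitivity by the same three-case analysis ($m = m'$, $m < m'$, $m > m'$) on these maximal disagreement indices, with irreflexivity and antisymmetry following immediately from the strict inclusion in \eqref{eq_sz_structure}. The only difference is presentational: you spell out the uniqueness of the witnessing index and the irreflexivity/antisymmetry arguments explicitly, which the paper dismisses as immediate.
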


\begin{proof}
  Condition \eqref{eq_sz_structure}
  immediately yields that $\leSZ$ is irreflexive and antisymmetric. It remains to show that  $\leSZ$ is transitive.
Define
\(a := \max \{ i \in \{0, \ldots, k \} \mid  \falsW^i(\omega_1) \neq \falsW^i(\omega_2)  \}\)
and
\(b := \max \{ i \in \{0, \ldots, k \} \mid  \falsW^i(\omega_2) \neq \falsW^i(\omega_3)  \}\).
Then $\omega_1 \leSZ \omega_2 $ and $\omega_2 \leSZ \omega_3$ is equivalent to
\(\falsW^a(\omega_1) \subsetneqq  \falsW^a(\omega_2)\)
and
\(\falsW^b(\omega_2) \subsetneqq  \falsW^b(\omega_3)\).

If $ a = b $ then 
\(
\falsW^a(\omega_1) \subsetneqq  \falsW^a(\omega_3)\) and \(a = \max \{ i \in \{0, \ldots,   k \} \mid \falsW^i(\omega_1) \neq \falsW^i(\omega_3) \}
\)
and so $ \omega_1 \leSZ \omega_3$.
If $ a < b $ then 
\(
 \falsW^b(\omega_1) \subsetneqq  \falsW^b(\omega_3)\) and \(b = \max \{ i \in \{0, \ldots, k \} \mid  \falsW^i(\omega_1) \neq \falsW^i(\omega_3)  \}
\)
and so $ \omega_1 \leSZ \omega_3$.
If $ a > b $ then $\falsW^i(\omega_2) = \falsW^i(\omega_3)$ for all $i \in \{ b+1, \ldots, k \}$ and $ b+1 \leq a \leq k$; therefore
\(
\falsW^a(\omega_1) \subsetneqq  \falsW^a(\omega_3)\) and \(a = \max \{ i \in \{0, \ldots, k \} \mid \falsW^i(\omega_1) \neq \falsW^i(\omega_3)  \}
\)
and so $ \omega_1 \leSZ \omega_3$.
\MEINqed
\end{proof}

\section{\systemSZgross}\label{sec_system_sz}

The \namePS\ \leSZ\ on worlds for a knowledge base \R\ is defined using both the tolerance information provided by the inclusion-maximal ordered partition $\fctOP{\R}$ and information on the set of falsified conditionals. Inference based on \leSZ\ is called \systemSZ inference
and is defined as follows.

\begin{definition}[\systemSZ, \inferSZ]\label{def_sz_inference}
  Let \R\ be a
knowledge base and $A, B$ be formulas.
  Then $B$ is a  \emph{\systemSZ inference from $A$ (in the context of \R)}, denoted
\begin{equation}\label{eq_sz_inference}
A \inferSZ B  \mbox{\, \ \ iff\, \ \ } \forall \omega' \in \Omega_{A\overline{B}} \; \exists \omega \in \Omega_{A B} \; \omega \leSZ \omega ' \, .
\end{equation}
\end{definition}
A consequence of this definition is that \systemSZ inference is as tractable as system~Z because the \namePS on worlds is obtained directly from the ordered partition of \R\ and the verification/falsification behavior of \R.
We apply the definition of \systemSZ to our running example.

\begin{example}[\Rbirds, cont.]\label{exa_bird_systemSZ}
Consider again \Rbirds\ from Example \ref{exa:bird}.
Let us show that for $A=b \,  f $ and $B =  \ol {p}$ we have $A \inferSZArg{\Rbirds} B$, i.e., that flying birds are usually not penguins.
Due to
\(\falsW( b \, f \, \ol{p} ) = \varnothing\) and \(\falsW(b \, f \, p) = \{ r_2, r _3 \}\)
(see Table~\ref{tab:penguin_verification_and_falsification})
it follows that $ b \,  f \,  \ol{p} \leSZarg{\Rbirds} b \,  f \, p$. Therefore, since $\Omega_{A B} = \{ b \,  f \,   \ol {p} \} $ and $\Omega_{A \ol {B}} = \{ b \,  f \,   p \} $, from (\ref{eq_sz_inference}) it follows that indeed $ b \, f \inferSZArg{\Rbirds} \ol {p} $. 
\end{example}

Note that  $b \, f \nmableit_{\Rbirds}^c \ol {p}$, i.e., this inference is also a skeptical c-inference (cf.\ \cite[Example 5]{BeierleEichhornKernIsbernerKutsch2018AMAI}).
Therefore, Example~\ref{exa_bird_systemSZ} presents a c-inference that is also a \systemSZ inference.
The following proposition tells us that $A \nmableit _{\mathcal{R}}^{c} B$ always implies $A \inferSZ B$.
\begin{proposition}[\systemSZ captures c-inference]\label{prop_extends_c_inference}
Let $\mathcal{R} $ be a consistent knowledge base.
Then we have for all formulas $ A \, , B \in \mathcal{L} $
\begin{equation}\label{eq_extends_c_inference}
A \nmableit_{\mathcal{R}}^{c} B \quad \implies A \inferSZ B  \, .
\end{equation} 
\end{proposition}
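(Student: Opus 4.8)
The plan is to argue by contraposition, showing that $A \notinferSZ B$ implies $A \notnmableit_{\R}^c B$. Since $A \nmableit_{\R}^c B$ demands that \emph{every} c-representation of $\R$ entail $B$ from $A$ (Definition~\ref{def_consequence_relations}), to refute it I only need to produce a \emph{single} c-representation $\kappa_{\kappaiminusVektor} \in \mathcal{O}(\cspR)$ under which the inference fails. Proposition~\ref{lemma_technical_sz} is precisely the device that builds such a c-representation out of a failure of the \namePS relation, so the whole proof reduces to feeding it the right input and reading off its output.

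First I would unfold the negation of \systemSZ inference. By \eqref{eq_sz_inference}, $A \notinferSZ B$ says that there is a world $\omega' \in \Omega_{A\overline{B}}$ with $\omega \not\leSZ \omega'$ for every $\omega \in \Omega_{AB}$; in particular $\Omega_{A\overline{B}} \neq \varnothing$, so $A \not\equiv \bot$ and the disjunctive clause of \eqref{eq_kappa_entailment} plays no role. Fixing this $\omega'$ and putting $\Omega_V := \Omega_{AB}$, the hypothesis of Proposition~\ref{lemma_technical_sz}---that $\omega \not\leSZ \omega'$ for all $\omega \in \Omega_V$---is exactly what we have just obtained. Applying the proposition then hands us a c-representation $\kappa_{\kappaiminusVektor} \in \mathcal{O}(\cspR)$ with $\kappa_{\kappaiminusVektor}(\omega') \leq \kappa_{\kappaiminusVektor}(\omega)$ for all $\omega \in \Omega_{AB}$. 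Comparing this with the equivalent formulation of $\kappa$-entailment in \eqref{eq_kappa_entailment_equiv}, the witness $\omega' \in \Omega_{A\overline{B}}$ admits no $\omega \in \Omega_{AB}$ with $\kappa_{\kappaiminusVektor}(\omega) < \kappa_{\kappaiminusVektor}(\omega')$, so $A \notnmableit^{\kappa_{\kappaiminusVektor}} B$; as $\kappa_{\kappaiminusVektor}$ is a c-representation of $\R$, this yields $A \notnmableit_{\R}^c B$, closing the contrapositive. Note that the degenerate case $\Omega_{AB} = \varnothing$ needs no separate treatment: the inequality $\kappa_{\kappaiminusVektor}(\omega') \leq \kappa_{\kappaiminusVektor}(\omega)$ then holds over an empty index set, and $A \notnmableit^{\kappa_{\kappaiminusVektor}} B$ follows because the inner existential over $\Omega_{AB}$ is empty.

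I do not expect a genuinely hard step here: all the real work is concentrated in Proposition~\ref{lemma_technical_sz}, whose hypothesis and conclusion are shaped to match, respectively, the negation of \systemSZ inference and the negation of $\kappa$-entailment. The only point demanding care is the bookkeeping of the alternating quantifiers---turning the positive ``$\forall\omega'\,\exists\omega$'' statements into their ``$\exists\omega'\,\forall\omega$'' negations and making sure the same witness $\omega'$ is carried through both translations---together with the quick check that the boundary cases $A \equiv \bot$ and $\Omega_{AB} = \varnothing$ do not disturb the correspondence.
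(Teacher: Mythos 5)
Your proof is correct and takes essentially the same route as the paper's own: contraposition, fixing the witness $\omega' \in \Omega_{A\overline{B}}$, setting $\Omega_V := \Omega_{AB}$, and invoking Proposition~\ref{lemma_technical_sz} to obtain a c-representation $\kappa_{\kappaiminusVektor}$ with $\kappa_{\kappaiminusVektor}(\omega') \leq \kappa_{\kappaiminusVektor}(\omega)$ for all $\omega \in \Omega_{AB}$, which refutes c-entailment. Your explicit handling of the boundary cases $A \equiv \bot$ and $\Omega_{AB} = \varnothing$ is careful bookkeeping that the paper leaves implicit, but it does not change the argument.
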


\begin{proof}
  The proof of~(\ref{eq_extends_c_inference}) is by contraposition.
  Assume $A \notinferSZ B$ and thus
\begin{equation}\label{Nec.E3}
\exists \omega' \in \Omega_{A\overline{B}} \; \forall \omega \in \Omega_{A B} \; \omega  \not \leSZ \omega ' \, .
\end{equation}
Our goal is to show   $A \notnmableit _{\mathcal{R}}^{c} B$.
Let us fix $\omega' \in \Omega_{A \overline{B} } $ such that~(\ref{Nec.E3}) holds. Let us define $\Omega_V := \Omega_{A B} $. 
Then $ \omega \not \leSZ \omega ' $  for all $\omega \in \Omega_V $. Due to Lemma~\ref{lemma_technical_sz} there exists a c-representation  $\kappa_{\kappaiminusVektor} \in \mathcal{O}(CR(\mathcal{R}))$
 such that
$ \kappa_{ \kappaiminusVektor }(\omega') \leq \kappa_{ \kappaiminusVektor }(\omega) $
 for all $ \omega \in \Omega_{A B} $. This means that
$A \notnmableit _{\mathcal{R}}^{\kappa_{\kappaiminusVektor}} B $ 
and so indeed $A \notnmableit _{\mathcal{R}}^{c} B$.
\MEINqed
\end{proof}

Furthermore, every system Z inference is also a \systemSZ inference.

\begin{proposition}[\systemSZ captures system Z]\label{prop_extends_system_z_inference}
Let $\mathcal{R} $ be a consistent knowledge base.
Then we have for all formulas $ A \, , B \in \mathcal{L} $
\begin{equation}\label{eq_extends_system_z_inference}
A \nmableit_{\R}^{Z} B \quad \implies A \inferSZ B  \, .
\end{equation} 
\end{proposition}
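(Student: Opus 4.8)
The plan is to reduce the inference-level implication to a single world-level comparison and then verify that comparison by unwinding the definition of $\kappa^Z$. Concretely, I would first prove the following key claim: for all worlds $\omega, \omega' \in \Omega$,
\[
\kappa^Z(\omega) < \kappa^Z(\omega') \implies \omega \leSZ \omega'.
\]
Granting this claim, the proposition follows immediately, because the defining condition \eqref{eq_sz_inference} of $A \inferSZ B$ has exactly the same $\forall\exists$ shape over the same world sets $\Omega_{A\overline{B}}$ and $\Omega_{AB}$ as the equivalent characterization \eqref{eq_kappa_entailment_equiv} of $A \nmableit_{\R}^{Z} B$, with the comparison $\kappa^Z(\omega) < \kappa^Z(\omega')$ replaced by $\omega \leSZ \omega'$. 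Thus, given $A \nmableit_{\R}^{Z} B$, for every $\omega' \in \Omega_{A\overline{B}}$ we obtain a witness $\omega \in \Omega_{AB}$ with $\kappa^Z(\omega) < \kappa^Z(\omega')$, and the claim upgrades this to $\omega \leSZ \omega'$; the case $A \equiv \bot$ is subsumed since then $\Omega_{A\overline{B}} = \emptyset$ and both sides hold vacuously.

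For the key claim I would rewrite $\kappa^Z$ in terms of the falsification functions $\falsW^j$. By \eqref{eq_system_z} and the definition $Z(r_i) = j$ for $r_i \in \R_j$, the rank $\kappa^Z(\omega)$ equals $0$ when $\omega$ falsifies no conditional, and otherwise equals $1 + \max\{\, j \mid \falsW^j(\omega) \neq \emptyset \,\}$; that is, $\kappa^Z$ is determined solely by the largest partition index in which $\omega$ falsifies something. I then set $m := \max\{\, j \mid \falsW^j(\omega') \neq \emptyset \,\}$, which is well defined because $\kappa^Z(\omega') > \kappa^Z(\omega) \geq 0$ forces $\omega'$ to falsify at least one conditional, and I check the two defining conditions of $\omega \leSZ \omega'$ from Definition~\ref{def_sz_structure_worlds} with this $m$.

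The verification splits according to whether $\omega$ falsifies a conditional. If $\omega$ falsifies nothing, then $\falsW^i(\omega) = \emptyset$ for every $i$; by maximality of $m$ we get $\falsW^i(\omega') = \emptyset = \falsW^i(\omega)$ for all $i > m$, while $\falsW^m(\omega) = \emptyset \subsetneqq \falsW^m(\omega')$. If $\omega$ falsifies something, then $\kappa^Z(\omega) < \kappa^Z(\omega')$ yields $\max\{\, j \mid \falsW^j(\omega) \neq \emptyset \,\} < m$, so $\falsW^i(\omega) = \emptyset$ for all $i \geq m$; again this gives $\falsW^i(\omega) = \emptyset = \falsW^i(\omega')$ for $i > m$ and $\falsW^m(\omega) = \emptyset \subsetneqq \falsW^m(\omega')$. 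In both cases the conditions of \eqref{eq_sz_structure} hold, hence $\omega \leSZ \omega'$.

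The step I expect to be the crux is the reformulation of $\kappa^Z$ as depending only on the maximal falsified partition index, since this is precisely where $\leSZ$ is coarser on the decisive top level yet refines it: system~Z collapses each level to the single bit ``is anything falsified here'', whereas \systemSZ compares the full sets $\falsW^m$. The containment direction works out because at the top level $m$ the set $\falsW^m(\omega)$ is forced to be empty and is therefore strictly contained in the nonempty $\falsW^m(\omega')$. There is no genuinely hard calculation here; the only care needed is in handling the boundary case where $\omega$ falsifies no conditional and in confirming that $m$ is well defined.
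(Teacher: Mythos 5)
Your proposal is correct and follows essentially the same route as the paper's proof: the paper likewise reduces the proposition to the world-level claim that $\kappa^Z(\omega) < \kappa^Z(\omega')$ implies $\omega \leSZ \omega'$ (asserted there by ``inspecting'' \eqref{eq_system_z} and \eqref{eq_sz_structure}) and then concludes by comparing \eqref{eq_kappa_entailment_equiv}, applied to $\kappa^Z$, with \eqref{eq_sz_inference}. Your write-up simply fills in the details of that inspection --- the reformulation of $\kappa^Z$ via the maximal falsified partition index, the choice of $m$, and the two cases --- which the paper leaves implicit.
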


\begin{proof}
Inspecting
(\ref{eq_system_z}) and (\ref{eq_sz_structure}) and given any worlds $\omega , \omega' \in  \Omega$, 
   we conclude that 
\(\kappa^Z(\omega) < \kappa^Z(\omega')\) implies \(\omega \leSZ \omega'\). 
Therefore,
comparing~(\ref{eq_kappa_entailment_equiv}), applied to the ranking function $\kappa^{Z}$, with~(\ref{eq_sz_inference}), shows that~(\ref{eq_extends_system_z_inference}) is fulfilled.
\MEINqed
\end{proof}

In \cite{KernIsbernerEichhorn2012c},
a preference relation on worlds is defined that is based on structural
information by preferring a world $\omega$ to a world $\omega'$  if $\omega$ falsifies fewer conditionals than  $\omega'$ and  $\omega'$ falsifies at least all conditionals falsified by $\omega$.
Using this preference relation, the following entailment relation along the scheme
as given by \eqref{eq_kappa_entailment_equiv} is obtained;
we present the definition from  \cite{KernIsbernerEichhorn2012c} in a slightly modified form adapted to our notion $\falsW(\omega)$ for the set of conditionals from \R\ falsified by~$\omega$.

\begin{definition}[$\sigma_{\R}$-structural inference~\cite{KernIsbernerEichhorn2012c}]
Let
$\R = \{r_1,\ldots,r_n\}$ 
with $r_i = \satzCL{B_i}{A_i}$ for $i = 1,\ldots,n$
  be a knowledge base, $A,B$ formulas, and let $<_{\R}^{\sigma}$ be the relation on worlds given by
\(\omega <_{\R}^{\sigma} \omega'\) iff
\(\falsW(\omega)  \subsetneqq \falsW(\omega')\).
Then $B$ can be
structurally inferred, or $\sigma_{\R}$-inferred, from $A$, written as
\begin{equation}\label{eq_structural_inference}
A \nmableit_{\R}^{\sigma} B \mbox{\, \ \ iff\, \ \ } \forall \omega' \in \Omega_{A\overline{B}} \; \exists \omega \in \Omega_{A B} \; \omega <_{\R}^{\sigma} \omega ' \, .
\end{equation}
\end{definition}

We can show that every $\sigma_{\R}$-structural inference is also a \systemSZ inference.

\begin{proposition}[\systemSZ captures $\sigma_{\R}$-structural inference]\label{prop_extends_structural_inference}
Let $\mathcal{R} $ be a consistent knowledge base.
Then we have for all formulas $ A \, , B \in \mathcal{L} $
\begin{equation}\label{eq_extends_structural_inference}
A \nmableit_{\R}^{\sigma} B \quad \implies A \inferSZ B  \, .
\end{equation} 
\end{proposition}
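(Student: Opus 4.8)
The plan is to reduce the inference-level implication \eqref{eq_extends_structural_inference} to a pointwise implication between the two underlying preference relations on worlds. Concretely, I would prove the auxiliary claim that $\omega <_{\R}^{\sigma} \omega'$ implies $\omega \leSZ \omega'$ for all $\omega, \omega' \in \Omega$. Once this is established, the proposition follows immediately: both $\nmableit_{\R}^{\sigma}$ in \eqref{eq_structural_inference} and $\inferSZ$ in \eqref{eq_sz_inference} are defined by exactly the same quantifier pattern $\forall \omega' \in \Omega_{A\overline{B}}\, \exists \omega \in \Omega_{AB}$, differing only in whether the witness $\omega$ satisfies $\omega <_{\R}^{\sigma} \omega'$ or $\omega \leSZ \omega'$. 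Hence, if every $<_{\R}^{\sigma}$-witness is also a $\leSZ$-witness, then $A \nmableit_{\R}^{\sigma} B$ directly yields $A \inferSZ B$.

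For the auxiliary claim I would first exploit that the sets $\R_0, \ldots, \R_k$ form a partition of \R. Combining \eqref{eq_mapping_f_j} and \eqref{eq_mapping_f}, this gives the decomposition $\falsW(\omega) = \bigcup_{j=0}^{k} \falsW^j(\omega)$ as a disjoint union, so that $\falsW^j(\omega) = \falsW(\omega) \cap \R_j$ for each $j$. Assuming $\omega <_{\R}^{\sigma} \omega'$, i.e.\ $\falsW(\omega) \subsetneqq \falsW(\omega')$, intersecting with $\R_j$ on both sides yields $\falsW^j(\omega) \subseteq \falsW^j(\omega')$ for every $j \in \{0, \ldots, k\}$. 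Moreover, the strictness of $\falsW(\omega) \subsetneqq \falsW(\omega')$ provides some conditional $r \in \falsW(\omega') \setminus \falsW(\omega)$; if $r \in \R_j$, then that index $j$ satisfies $\falsW^j(\omega) \subsetneqq \falsW^j(\omega')$, so the set of indices where the two worlds differ is nonempty.

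To finish, I would set $m := \max\{ i \in \{0, \ldots, k\} \mid \falsW^i(\omega) \neq \falsW^i(\omega') \}$, which is well defined by the previous paragraph. By maximality, $\falsW^i(\omega) = \falsW^i(\omega')$ for all $i \in \{m+1, \ldots, k\}$, and at the index $m$ we have $\falsW^m(\omega) \neq \falsW^m(\omega')$ together with $\falsW^m(\omega) \subseteq \falsW^m(\omega')$, hence $\falsW^m(\omega) \subsetneqq \falsW^m(\omega')$. This is precisely condition \eqref{eq_sz_structure}, so $\omega \leSZ \omega'$, which completes the auxiliary claim and thereby the proposition.

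I do not expect a serious obstacle here; the argument is essentially bookkeeping over the tolerance partition. The only point that deserves care is ensuring that the strict inclusion survives at the \emph{top} differing index $m$: a priori, one only knows $\falsW^m(\omega) \neq \falsW^m(\omega')$ there, and it is the global, level-wise subset relation $\falsW^j(\omega) \subseteq \falsW^j(\omega')$ (valid at each level because the $\R_j$ partition \R) that upgrades this to the required proper inclusion. Keeping this combination of ``subset at every level'' together with ``difference at the highest level'' in view is the crux of the reduction.
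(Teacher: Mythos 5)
Your proof is correct and takes essentially the same approach as the paper: the paper's (much terser) proof likewise reduces everything to the pointwise claim that $\omega <_{\R}^{\sigma} \omega'$ implies $\omega \leSZ \omega'$ and then appeals to the identical quantifier structure of \eqref{eq_structural_inference} and \eqref{eq_sz_inference}. Your write-up simply supplies the partition bookkeeping (level-wise inclusion $\falsW^j(\omega) \subseteq \falsW^j(\omega')$ plus strict inclusion at the maximal differing index $m$) that the paper compresses into the word ``inspecting.''
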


\begin{proof}
  Inspecting (\ref{eq_sz_structure}) and the definition of $<_{\R}^{\sigma}$,
we conclude that 
   \(\omega <_{\R}^{\sigma} \omega'\) implies \(\omega \leSZ \omega'\)
   for  all $\omega , \omega' \in  \Omega$.
Combining (\ref{eq_sz_structure}) and (\ref{eq_structural_inference})
yields~(\ref{eq_extends_structural_inference}).
\MEINqed
\end{proof}

The following proposition summarizes Propositions \ref{prop_extends_c_inference}, \ref{prop_extends_system_z_inference}, and
\ref{prop_extends_structural_inference} and
shows aditionally that \systemSZ strictly extends skeptical c-inference, system Z, and structural inference
by licensing more entailments than each of these three inference modes.

\begin{proposition}[\systemSZ]\label{prop_system_W_captures_all}
 For every consistent knowledge base \R\
\begin{align}
\label{eq_summary_includes_all}
\nmableit_{\R}^{c} \, \subseteq \inferSZ,
\,\,
\nmableit_{\R}^{Z} \, \subseteq \inferSZ\,\,
\textrm{ and \ }
\nmableit_{\R}^{\sigma} \, \subseteq \inferSZ. 
\end{align}
Furthermore, all inclusions in \eqref{eq_summary_includes_all} are strict, i.e., there are $\R_1 \, , \R_2 \, , \R_3$ with:
\begin{align}
\label{eq_summary_includes_strictly_c_rep}
\nmableit_{\R_1}^{c} \, \subsetneqq \inferSZarg{\R_1}
\\
\label{eq_summary_includes_strictly_z}
\nmableit_{\R_2}^{Z} \, \subsetneqq  \inferSZarg{\R_2} 
\\
\label{eq_strictly_includes_strictly_sigma}
\nmableit_{\R_3}^{\sigma} \, \subsetneqq \inferSZarg{\R_3} 
\end{align}
\end{proposition}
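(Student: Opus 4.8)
The plan is to separate the three non-strict inclusions from the three strictness witnesses. The inclusions in \eqref{eq_summary_includes_all} require no new argument: they are exactly Propositions~\ref{prop_extends_c_inference}, \ref{prop_extends_system_z_inference}, and \ref{prop_extends_structural_inference}, which for every consistent \R\ and all $A,B$ give $A \nmableit_{\R}^{c} B \Rightarrow A \inferSZ B$, $A \nmableit_{\R}^{Z} B \Rightarrow A \inferSZ B$, and $A \nmableit_{\R}^{\sigma} B \Rightarrow A \inferSZ B$. So the first line of the proof merely collects these three implications as inclusions of the associated relations.

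For the strictness claims I would observe that the running example already suffices and set $\R_1 = \R_2 = \R_3 = \Rbirds$. The reason \Rbirds\ separates system~W inference from the three weaker relations is visible in Table~\ref{tab:penguin_verification_and_falsification}: the falsified sets $\{r_1\}$, $\{r_2,r_3\}$, $\{r_2,r_4\}$, $\{r_4\}$ are pairwise incomparable under set inclusion, yet $\leSZarg{\Rbirds}$ resolves several of them through the partition $(\R_0,\R_1)$. For \eqref{eq_summary_includes_strictly_z} and \eqref{eq_summary_includes_strictly_c_rep} I would take $A = p\ol b$, $B = \ol f$, so that $\Omega_{AB} = \{p\ol b\ol f\}$ and $\Omega_{A\ol B} = \{p\ol b f\}$. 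Since $\falsW^{1}(p\ol b\ol f) = \{r_4\} \subsetneqq \{r_2,r_4\} = \falsW^{1}(p\ol b f)$ while $\falsW^{0}$ agrees, Definition~\ref{def_sz_structure_worlds} yields $p\ol b\ol f \leSZarg{\Rbirds} p\ol b f$, hence $p\ol b \inferSZarg{\Rbirds} \ol f$ by Definition~\ref{def_sz_inference}. But $\kappa^{Z}(p\ol b\ol f) = \kappa^{Z}(p\ol b f) = 2$ (last row of Table~\ref{tab:penguin_verification_and_falsification}), so $p\ol b \notnmableit_{\Rbirds}^{Z} \ol f$, establishing \eqref{eq_summary_includes_strictly_z}. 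For \eqref{eq_summary_includes_strictly_c_rep} I would then certify one offending c-representation: the impact vector $\kappaiminusVektor = (1,0,2,2)$ satisfies all four acceptance inequalities~\eqref{eq_kappa_accepts_r_with_kappaiminus}, so by Proposition~\ref{prop_modeling_cr} it induces a c-representation with $\kappa_{\kappaiminusVektor}(p\ol b\ol f) = \eta_4 = 2 = \eta_2 + \eta_4 = \kappa_{\kappaiminusVektor}(p\ol b f)$, whence $p\ol b \notnmableit_{\Rbirds}^{c} \ol f$.

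For \eqref{eq_strictly_includes_strictly_sigma} I would switch to $A = p$, $B = \ol f$, giving $\Omega_{AB} = \{pb\ol f, p\ol b\ol f\}$ and $\Omega_{A\ol B} = \{pbf, p\ol b f\}$. System~W infers this because $pb\ol f$, with $\falsW^{1}(pb\ol f) = \varnothing$, lies $\leSZarg{\Rbirds}$-below both worlds of $\Omega_{A\ol B}$, so $p \inferSZarg{\Rbirds} \ol f$. The relation $<^{\sigma}_{\Rbirds}$, however, fails at $\omega' = pbf$: its falsified set $\{r_2,r_3\}$ properly contains neither $\falsW(pb\ol f) = \{r_1\}$ nor $\falsW(p\ol b\ol f) = \{r_4\}$, so no $\omega \in \Omega_{AB}$ satisfies $\omega <^{\sigma}_{\Rbirds} pbf$ and thus $p \notnmableit_{\Rbirds}^{\sigma} \ol f$. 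I expect the only delicate point to be \eqref{eq_summary_includes_strictly_c_rep}: since c-inference quantifies over all c-representations, a single counterexample suffices but must be certified by checking the full constraint system $\cspRArg{\Rbirds}$ against Definition~\ref{def_csp_fuer_r}; everything else reduces to the mechanical $\leSZarg{\Rbirds}$-, rank-, and subset-comparisons already tabulated.
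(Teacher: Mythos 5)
Your proof is correct, and it differs from the paper's own proof in one genuine respect. The non-strict inclusions and the witnesses for strictness over system~Z and over $\sigma$-structural inference coincide exactly with the paper's: cite Propositions~\ref{prop_extends_c_inference}, \ref{prop_extends_system_z_inference}, and~\ref{prop_extends_structural_inference}, then use $A = p\,\ol{b}$, $B=\ol{f}$ against $\kappa^Z$, and $A=p$, $B=\ol{f}$ against $<^{\sigma}_{\Rbirds}$. Where you diverge is the c-inference strictness \eqref{eq_summary_includes_strictly_c_rep}: the paper introduces a fresh two-conditional knowledge base $\R^{*} = \{(b|a),\,(b\,c|a)\}$, whose constraint system collapses to the two trivial constraints $\eta_1 > -\eta_2$ and $\eta_2 > 0$, and takes the solution $(0,1)$; you instead reuse $\Rbirds$ with the same witness pair $(p\,\ol{b},\ol{f})$ as in the system-Z case, certified by the impact vector $(1,0,2,2)$. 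Checking your vector against Definition~\ref{def_csp_fuer_r}, the four constraints for $\Rbirds$ amount to $\eta_1 > 0$, $\eta_2 > \min(\eta_1,\eta_4) - \min(\eta_3,\eta_4)$, $\eta_3 > \eta_1 - \eta_2$, and $\eta_4 > \min(\eta_1, \eta_2+\eta_3)$, i.e.\ $1>0$, $0>-1$, $2>1$, $2>1$, all satisfied; and indeed $\kappa_{\kappaiminusVektor}(p\,\ol{b}\,\ol{f}) = \eta_4 = 2 = \eta_2+\eta_4 = \kappa_{\kappaiminusVektor}(p\,\ol{b}\,f)$, so $p\,\ol{b} \notnmableit_{\Rbirds}^{c} \ol{f}$ as you claim. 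Your route buys a more unified statement---a single knowledge base $\R_1=\R_2=\R_3=\Rbirds$ witnesses all three strict inclusions, showing the running example alone separates \systemSZ from all three relations---at the cost of verifying a four-constraint CSP by hand; the paper's route buys a near-trivial CSP verification at the cost of a separate example and table. One cosmetic slip: your motivating remark that the falsified sets $\{r_1\},\{r_2,r_3\},\{r_2,r_4\},\{r_4\}$ are pairwise incomparable under inclusion is false, since $\{r_4\}\subsetneqq\{r_2,r_4\}$; nothing in your actual argument depends on it, but it should be deleted or corrected.
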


\begin{proof}
  The inclusions in \eqref{eq_summary_includes_all} are shown in Propositions \ref{prop_extends_c_inference}, \ref{prop_extends_system_z_inference}, and~\ref{prop_extends_structural_inference}.
  Thus, we are left to show that the inclusions in \eqref{eq_summary_includes_strictly_c_rep} -- \eqref{eq_strictly_includes_strictly_sigma} are strict.
\begin{enumerate}

\item For proving the strictness part of (\ref{eq_summary_includes_strictly_c_rep}), consider the knowledge base $\R^* = \{(b|a) , ( b \, c| a) \}$ whose verification/falsification behavior is given by Table~\ref{tab:r_star_verification_and_falsification}.
First, due to 
\(
\falsW(a \,  b \, \overline{c} ) = \{ ( b \, c| a) \} \subsetneqq \{(b|a) , ( b \, c| a) \} =  \falsW( a \, \overline{ b} \, \overline{c} )  
\),
we obtain $  a \, \overline{c}  \inferSZArg{\R^*}  b$.
Making use of the verification/falsification behavior stated  in Table~\ref{tab:r_star_verification_and_falsification},  for $\cspRArg{\R^*}$ we obtain 
\(\eta_1  >  -\eta_2\) and \(\eta_2 > 0 \). Now
consider the solution vector $\kappaiminusVektor = (\eta_1 , \eta_2 ) = (0,1) $.
For the associated c-representation $\kappa_{\kappaiminusVektor}$ 
(see Table~\ref{tab:r_star_verification_and_falsification}) we then obtain
\(
\kappa_{\kappaiminusVektor} ( a \, b \, \ol {c} ) =  \eta_2  =   \eta_1 + \eta_2 = \kappa_{\kappaiminusVektor} ( a \,  \ol{b} \, \ol {c} )
\)
and thus $ a \, \overline{c} \notnmableit_ {\R^*}^c b $.

\item For proving the strictness part of (\ref{eq_summary_includes_strictly_z}), consider 
the knowledge base $\Rbirds$  from Example~\ref{exa:bird}. Let us show that for $A=p \,  \ol {b} $ and $B =  \ol {f}$ we have $A \inferSZarg{\Rbirds} B$, i.e., that penguins which are no bird usually do not fly.
According to  Example~\ref{ex_sz_order}, we have $ p \, \ol{b} \, \ol{f} \leSZarg{\Rbirds} p \, \ol{b} \, f$. Therefore, since $\Omega_{A B} = \{ p \, \ol{b} \,   \ol {f} \} $ and $\Omega_{A \ol {B}} = \{ p \, \ol{b} \,  f \} $ it follows from
  (\ref{eq_sz_inference}) that indeed $ p \, \ol{b} \inferSZArg{\Rbirds} \ol {f} $. 
Looking at Table~\ref{tab:penguin_verification_and_falsification}, we observe
\(
\kappa^Z(A B) = 2 = \kappa^Z(A \ol {B})
\)
and thus
$ p \, \ol{b} \notnmableit_{\Rbirds}^Z \ol {f} $.

\item  For proving the strictness part of (\ref{eq_strictly_includes_strictly_sigma}),  consider again 
  $\Rbirds$ with $\fctOP{\Rbirds} = (\R_0, \R_1)$ where
\(
\mathcal{R}_0  = \{ (f | b) \}\) and \(\mathcal{R}_1  =  \{ (\overline{f} | p), \, (\overline{f} | bp), \, (b | p) \}
\)
(cf.\ Example~\ref{ex_sz_order}).
For  $\omega = p \, b \, f $, we get (cf.\ Table~\ref{tab:penguin_verification_and_falsification}) that
$\falsW(\omega) = \{ \satzCL{\overline{f}}{p}, \satzCL{\overline{f}}{pb} \}$, 
\(
\falsW(p \, b \, \overline{f} ) = \{ (f|b)  \}\) and  \(\falsW(p \, \overline {b} \, \overline{f})  = \{( b|p ) \}
\).
Thus, there is no world $\omega' \in \Omega $ with $\omega' \models p \, \overline{f} $ and $\omega' <_{\Rbirds}^{\sigma} \omega$ (which is equivalent to $\falsW(\omega') \subsetneqq \falsW(\omega)$). Therefore,  $p   \notnmableit_{\Rbirds}^{\sigma} \overline{f}$.
To show $ p \inferSZ \overline{f}$ fix any $\omega \in \Omega$ with $ \omega \models   p  \, f $. Then $ (\overline {f} | p) \in \falsW(\omega) $ where $ (\overline {f} | p) \in R_1$.
For  $\omega' = p \, b \, \overline{f}$ we have
$ \omega' \leSZ \omega$   due to $\falsW(\omega') = \{(f|b) \}$ where $(f|b) \in \R_0$. Thus, indeed   $ p \inferSZ \overline{f}$.
\MEINqed
\end{enumerate}
\end{proof}

\begin{table}[tb]
\begin{center}
 \begin{tabular}{c@{\qquad}c@{\quad}c@{\quad}c@{\quad}c@{\quad}c@{\quad}c@{\quad}c@{\quad}c}
  \toprule $\omega$
	&$\mathit{ a b c}$&$\mathit{ a b \ol{c}}$&$\mathit{ a \ol{b} c}$&$\mathit{a\ol{b}\,\ol{c}}$&$\mathit{\ol{a}bc}$&$\mathit{\ol{a}b\ol{c}}$&$\mathit{\ol{a}\ol{b}c}$&$\mathit{\ol{a}\ol{b}\,\ol{c}}$\\
  \midrule
  $r_1=(b|a)$&$\verify$& $\verify$& $\falsif$ & $\falsif$ &$-$& $-$& $-$& $-$\\
  $r_2=( b c |a)$&$\verify$&$\falsif$&$\falsif$&$\falsif$& $-$&  $-$& $-$& $-$\\
  \midrule
$\induzierteOCF{\kappaiminusVektor}(\omega)$&$0$&$\kappaiminus{2}$&$\kappaiminus{1}+\kappaiminus{2}$&$\kappaiminus{1}+\kappaiminus{2}$&$0$&$0$&$0$&$0$\\
\bottomrule
 \end{tabular}
\end{center} 
 \caption{Verification/falsification behavior and (generic) c-representation  of the knowledge base $\R^{*}$ in the proof of Proposition~\ref{prop_system_W_captures_all};  (\verify) indicates verification, (\falsif) falsification and (\neutra) non-applicability.}
 \label{tab:r_star_verification_and_falsification}
\end{table}

After comparing \systemSZ with other established
inference methods let us deal with further of its properties. 
\begin{proposition}
In general, \systemSZ inference cannot be obtained from a ranking function, i.e., there exists a knowledge base $\R$ such that there is no ranking function $\kappa : \Omega \to \mathbb{N}_0^{\infty} $ with $\inferSZ = \nmableit^\kappa$.
\end{proposition}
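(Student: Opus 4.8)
The plan is to take the bird knowledge base $\Rbirds$ of Lemma~\ref{lemma_no_ranking_inference} as the witness and to argue by contradiction. It is tempting to think that Lemma~\ref{lemma_no_ranking_inference} settles the claim at once, but it only forbids a ranking function whose strict order \emph{on worlds} coincides with $\leSZarg{\Rbirds}$; a priori, two different strict orders on worlds could still induce the \emph{same} lifted inference relation, since both $\inferSZ$ and $\nmableit^\kappa$ arise from the identical $\forall\exists$ scheme of~\eqref{eq_sz_inference} and~\eqref{eq_kappa_entailment_equiv}. The conceptual heart of the proof, and the only delicate point, is therefore to show that this lifting is \emph{invertible}: the underlying strict order on worlds can be read off from the induced inference relation by probing individual pairs of worlds with suitably tailored formulas.

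For this recovery step I would fix any two distinct worlds $\omega_1 \neq \omega_2$ and isolate exactly this pair. Let $A$ be the disjunction of the complete conjunctions of $\omega_1$ and $\omega_2$, so that $\Omega_A = \{\omega_1,\omega_2\}$, and let $B$ be a literal on which the two worlds disagree, chosen so that $\omega_1 \models B$ and $\omega_2 \models \overline{B}$; such a literal exists because $\omega_1 \neq \omega_2$ differ on at least one variable. Then $\Omega_{AB} = \{\omega_1\}$ and $\Omega_{A\overline{B}} = \{\omega_2\}$, and since $A$ is satisfiable the degenerate case plays no role. For any strict relation $\prec$ on worlds that is lifted via the scheme of~\eqref{eq_sz_inference}/\eqref{eq_kappa_entailment_equiv}, the inference of $B$ from $A$ then holds if and only if $\omega_1 \prec \omega_2$. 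Instantiating $\prec$ by $\leSZarg{\Rbirds}$ recovers $\omega_1 \leSZarg{\Rbirds} \omega_2$, and instantiating it by the strict order induced by $\kappa$ recovers $\kappa(\omega_1) < \kappa(\omega_2)$.

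I would then close the argument as follows. Assume, for contradiction, that some ranking function $\kappa$ satisfies $\inferSZArg{\Rbirds} = \nmableit^\kappa$. For each ordered pair of distinct worlds the probing formulas $A,B$ give $A \inferSZArg{\Rbirds} B$ iff $A \nmableit^\kappa B$ by this assumption, whence $\omega_1 \leSZarg{\Rbirds} \omega_2$ iff $\kappa(\omega_1) < \kappa(\omega_2)$. On the diagonal both relations are irreflexive (for $\leSZarg{\Rbirds}$ by Lemma~\ref{lemma_inference_transitive}, for $<$ trivially), so the equivalence extends to \emph{all} pairs of worlds. This exhibits a ranking function that captures $\leSZarg{\Rbirds}$ exactly, contradicting Lemma~\ref{lemma_no_ranking_inference}. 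The main obstacle is precisely the invertibility of the lifting established in the previous paragraph; once the probing formulas are available, the reduction to Lemma~\ref{lemma_no_ranking_inference} is immediate.
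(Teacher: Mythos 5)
Your proof is correct and takes essentially the same route as the paper: the paper's own proof consists of the single remark that the claim ``follows immediately from Lemma~\ref{lemma_no_ranking_inference}'', i.e.\ the same reduction, with \Rbirds\ as witness. The only difference is that you spell out the pair-probing step --- recovering $\omega_1 \leSZarg{\Rbirds} \omega_2 \iff \kappa(\omega_1) < \kappa(\omega_2)$ from the assumed equality $\inferSZArg{\Rbirds} = \nmableit^{\kappa}$ by isolating each pair of worlds with tailored formulas $A,B$ --- which the paper leaves implicit.
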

\begin{proof}
  This follows immediately from Lemma~\ref{lemma_no_ranking_inference}.
  \MEINqed
\end{proof}

Nonmonotonic inference relations are usually evaluated by means of properties.
In particular, the axiom system~P~\cite{Adams1975,KrausLehmannMagidor90} provides an important standard for plausible, nonmonotonic inferences.

\begin{proposition}\label{prop_satisfies_system_p}
\systemSZgross inference satisfies System P.
\end{proposition}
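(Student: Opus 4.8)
The plan is to verify that \systemSZ inference satisfies each of the rules of System P, namely Reflexivity, Left Logical Equivalence (LLE), Right Weakening (RW), Cautious Monotonicity (CM), the Or-rule, and Cut. Throughout I would work directly with the defining condition \eqref{eq_sz_inference}, i.e.\ $A \inferSZ B$ iff for every $\omega' \in \Omega_{A\ol{B}}$ there is some $\omega \in \Omega_{AB}$ with $\omega \leSZ \omega'$, and I would exploit the fact, established in Lemma~\ref{lemma_inference_transitive}, that $\leSZ$ is a strict partial order. Reflexivity ($A \inferSZ A$), LLE, and RW are immediate: Reflexivity holds because $\Omega_{A\ol{A}} = \varnothing$, so the universally quantified condition is vacuously true; LLE and RW follow because the sets $\Omega_{AB}$ and $\Omega_{A\ol{B}}$ depend only on the propositional meaning of $A$ and $B$, so logically equivalent premises (resp.\ weaker conclusions) induce identical (resp.\ nested) world sets.

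The more substantial rules are the Or-rule, CM, and Cut. For the \textbf{Or-rule} I would assume $A \inferSZ C$ and $B \inferSZ C$ and prove $A \vee B \inferSZ C$. Given any $\omega' \in \Omega_{(A\vee B)\ol{C}}$, it satisfies $A\ol{C}$ or $B\ol{C}$; applying the appropriate hypothesis yields a witness $\omega \in \Omega_{AC} \cup \Omega_{BC} \subseteq \Omega_{(A\vee B)C}$ with $\omega \leSZ \omega'$, which is exactly what is needed. For \textbf{Cautious Monotonicity} I would assume $A \inferSZ B$ and $A \inferSZ C$ and derive $AB \inferSZ C$. Here I take $\omega' \in \Omega_{AB\ol{C}} \subseteq \Omega_{A\ol{C}}$; the hypothesis $A \inferSZ C$ gives a witness $\omega_1 \in \Omega_{AC}$ with $\omega_1 \leSZ \omega'$, but $\omega_1$ need not satisfy $B$, so it is not yet a legal witness for the conclusion. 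The plan is to repair this: if $\omega_1 \models B$ we are done, and otherwise $\omega_1 \in \Omega_{A\ol{B}}$, so by $A \inferSZ B$ there is $\omega_2 \in \Omega_{AB}$ with $\omega_2 \leSZ \omega_1$, and I would then argue $\omega_2 \in \Omega_{ABC}$ and $\omega_2 \leSZ \omega'$ by transitivity, completing the chain.

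The analogous chaining argument handles \textbf{Cut}: assuming $AB \inferSZ C$ and $A \inferSZ B$, and taking $\omega' \in \Omega_{A\ol{C}}$, I would produce a witness in $\Omega_{ABC}$ dominating $\omega'$ by first using $A \inferSZ B$ and then $AB \inferSZ C$, again closing the gap with transitivity of $\leSZ$. The \textbf{main obstacle} lies precisely in these CM and Cut arguments at the point where the witness supplied by one hypothesis lands in the wrong world set (failing $B$, or satisfying $C$ but not $B$): the proof must show that in that case the \emph{other} hypothesis can be invoked and that the resulting dominance relations compose correctly under $\leSZ$. Because $\leSZ$ is only a partial order and not derived from a ranking function (Lemma~\ref{lemma_no_ranking_inference}), I cannot fall back on numeric rank comparisons; the argument rests entirely on transitivity and on the case distinction just described, so the care needed is in verifying that every witness produced genuinely satisfies the membership constraint on its world set. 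A clean way to organize all six rules uniformly would be to first establish an auxiliary \emph{domination} lemma of the form ``for every $\omega' \in \Omega_{A\ol{B}}$ there is a $\leSZ$-minimal witness in $\Omega_{AB}$ below it,'' and then chain such witnesses; I expect this transitivity-based chaining to be the crux, while the remaining rules are routine.
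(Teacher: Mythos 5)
Your strategy of verifying the six KLM rules one by one directly from \eqref{eq_sz_inference} is genuinely different from the paper's proof, which never touches the individual rules: it observes that \leSZ\ is a strict partial order (Lemma~\ref{lemma_inference_transitive}) and that $\Omega$ is finite, so that $[\Omega,\models,\leSZ]$ is a \emph{stoppered} classical preferential model in the sense of \cite{Makinson94}, whence system~P follows from the general theory of preferential inference \cite{KrausLehmannMagidor90,Makinson94}. Within your plan, the treatment of Reflexivity, LLE, RW and Or is correct, and your Cut argument does close: in the worst case one application of $A \inferSZ B$ followed by one application of $AB \inferSZ C$ yields a witness in $\Omega_{ABC}\subseteq\Omega_{AC}$, and transitivity finishes.

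The genuine gap is in Cautious Monotonicity, at exactly the step you single out. If the witness $\omega_1\in\Omega_{AC}$ with $\omega_1\leSZ\omega'$ fails $B$, then $A\inferSZ B$ gives $\omega_2\in\Omega_{AB}$ with $\omega_2\leSZ\omega_1$, but nothing forces $\omega_2\models C$: the claim ``$\omega_2\in\Omega_{ABC}$'' simply does not follow, and $\omega_2$ may lie in $\Omega_{AB\ol{C}}$, requiring another application of $A\inferSZ C$, and so on. Contrary to your remark that ``the argument rests entirely on transitivity and on the case distinction,'' transitivity plus one alternation cannot stop this process; the missing ingredient is the \emph{finiteness} of $\Omega$. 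Since \leSZ\ is irreflexive and transitive, the worlds in the descending chain $\cdots\leSZ\omega_2\leSZ\omega_1\leSZ\omega'$ are pairwise distinct, so in a finite $\Omega$ the alternation must terminate, and it can only terminate at some $\omega^{*}\in\Omega_{ABC}$, which transitivity places below $\omega'$. Equivalently, one can prove the smoothness (stopperedness) property underlying your proposed auxiliary lemma---every world of $\Omega_A$ is, or lies above, a \leSZ-minimal world of $\Omega_A$---but note that (i) this too requires finiteness (it can fail for infinite strict partial orders), and (ii) your formulation, a minimal witness in $\Omega_{AB}$, is not by itself sufficient for CM: the standard argument additionally shows that every \leSZ-minimal world of $\Omega_{AB}$ is \leSZ-minimal in $\Omega_A$, which is where the hypothesis $A\inferSZ B$ enters. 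With either repair, your direct verification becomes a correct, self-contained alternative to the paper's one-line appeal to \cite{Makinson94}; as written, the CM case is broken.
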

\begin{proof}
According to Lemma~\ref{lemma_inference_transitive}, \leSZ\ is a strict transitive relation. Furthermore, since $\Omega $ is finite, the triple $\mathcal{M}^{w}(\R)  =  [ \Omega , \models, \leSZ ]$ is a  stoppered classical preferential model \cite{Makinson94}. Thus, the definition of \systemSZ given by \eqref{eq_sz_inference} in Definition~\ref{def_sz_inference} ensures that \systemSZ inference is a preferential inference, hence satisfying system P (cf.\ \cite{KrausLehmannMagidor90,Makinson94}).
\MEINqed
\end{proof}

An inference relation suffers from the \emph{Drowning Problem} \cite{Pearl1990,BenferhatCayrolDuboisLangPrade1993} if it does not allow to infer properties of a superclass for a subclass that is exceptional with respect to another property 
because the respective conditional is ``drowned'' by others.
E.g., penguins are exceptional birds with respect to flying but not with respect to having wings.
So we would reasonably expect that penguins have wings.

\begin{example}[\RbirdsStern \cite{BeierleEichhornKernIsbernerKutsch2018AMAI}]\label{exa_nodrown}
	We extend the alphabet $\Sigma=\{\mathit{p,b,f}\}$ of our running example knowledge base \Rbirds\ from Example~\ref{exa:bird} 
with the variable $w$ for \emph{having wings}, the variable $a$ for \emph{being airborne}, and the variable $r$ for \emph{being red}, obtaining
the alphabet $\Sigma^\ast=\{\mathit{p,b,f,w,a,r}\}$.
	We use the knowledge base 
\[
	\RbirdsStern=\big\{
	\satzCL{f}{b},\,\,
	\satzCL{\ol{f}}{p},\,\,
	\satzCL{b}{p},\,\,
	\satzCL{w}{b},\,\,
	\satzCL{a}{f}\big\}
\]
where the conditional $\satzCL{w}{b}$ encodes the rule that birds usually have wings, and the conditional $\satzCL{a}{f}$ encodes the rule that flying things are usually airborne; the other three conditionals \(\satzCL{f}{b}\), \(\satzCL{\ol{f}}{p}\), \(\satzCL{b}{p}\) are the same as in \Rbirds.
\end{example}

The Drowning Problem distinguishes between inference relations that allow for subclass inheritance only for non-exceptional subclasses (like system~Z inference) and inference relations that allow for subclass inheritance for exceptional subclasses (like skeptical c-inference~\cite[Observation 1]{BeierleEichhornKernIsbernerKutsch2018AMAI} and inference with minimal c-representations, cf.~\cite{KernIsbernerEichhorn2012c,ThornEichhornKernIsbernerSchurz2015b}).
As an illustration for the drowning problem, consider \RbirdsStern\ from Example~\ref{exa_nodrown}. Here, we have
\(
\kappa^Z(p \, \ol{w}) = 1 = \kappa^Z (p \, w)
\),
and consequently  $p \notnmableit_{\RbirdsStern}^{Z} w$ (cf.\ \cite[Example 9]{BeierleEichhornKernIsbernerKutsch2018AMAI}),
illustrating that system Z suffers from the drowning problem.
In contrast, the following observation shows that \systemSZ licenses the inference that penguins usually have wings and thus avoids this drowning phenomenon.

\begin{Observation}\label{obs_no_drowing_sz}
\systemSZgross inference does not suffer from the drowning problem in Example~\ref{exa_nodrown},  i.e., we have $p  \inferSZarg{\RbirdsStern} w$.
\end{Observation}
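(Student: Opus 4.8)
\emph{(Plan.)}
The plan is to reduce $p \inferSZarg{\RbirdsStern} w$ directly to the verification condition \eqref{eq_sz_inference}: for every $\omega' \in \Omega_{p\ol{w}}$ I must exhibit some $\omega \in \Omega_{pw}$ with $\omega \leSZarg{\RbirdsStern} \omega'$. Before any worlds can be compared I need the inclusion-maximal partition $\fctOP{\RbirdsStern}$, on which Definition~\ref{def_sz_structure_worlds} rests. A short tolerance check produces it: the world $\ol{p}bfwa$ verifies $\satzCL{f}{b}$, $\satzCL{w}{b}$ and $\satzCL{a}{f}$ while falsifying no conditional of $\RbirdsStern$, so these three are tolerated by all of $\RbirdsStern$; by contrast, verifying $\satzCL{\ol{f}}{p}$ or $\satzCL{b}{p}$ under $p$ is incompatible with simultaneously satisfying $\satzCL{f}{b}$, $\satzCL{\ol{f}}{p}$ and $\satzCL{b}{p}$, so neither of these two is tolerated. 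Hence $\R_0 = \{\satzCL{f}{b},\satzCL{w}{b},\satzCL{a}{f}\}$ and $\R_1 = \{\satzCL{\ol{f}}{p},\satzCL{b}{p}\}$, so that $k=1$ and $\leSZarg{\RbirdsStern}$ compares two worlds first on their $\falsW^1$-sets and, only in case of a tie there, on their $\falsW^0$-sets.

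Next I would fix an arbitrary $\omega' \in \Omega_{p\ol{w}}$ and distinguish two cases according to the truth value of $b$ in $\omega'$. If $\omega' \models b$, I let $\omega$ be the world obtained from $\omega'$ by replacing $\ol{w}$ with $w$. Since $\satzCL{w}{b}$ is the only conditional mentioning $w$ and it lies in $\R_0$, this flip leaves every $\falsW^1$-set unchanged and deletes exactly $\satzCL{w}{b}$ from the level-$0$ falsification set (it was falsified by $\omega'$ because $\omega' \models b\ol{w}$). Thus $\falsW^1(\omega)=\falsW^1(\omega')$ and $\falsW^0(\omega) \subsetneqq \falsW^0(\omega')$, which is precisely $\omega \leSZarg{\RbirdsStern} \omega'$ with $m=0$.

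If instead $\omega' \models \ol{b}$, the $w$-flip changes no falsification set, so I use a fixed witness: the world $\omega = pb\ol{f}w$, completed arbitrarily on $a$ and $r$. Because $\omega \models pb\ol{f}$ it falsifies neither $\satzCL{b}{p}$ nor $\satzCL{\ol{f}}{p}$, so $\falsW^1(\omega) = \varnothing$; on the other hand $\omega' \models p\ol{b}$ falsifies $\satzCL{b}{p}$, whence $\satzCL{b}{p} \in \falsW^1(\omega')$ and $\falsW^1(\omega) \subsetneqq \falsW^1(\omega')$, giving $\omega \leSZarg{\RbirdsStern} \omega'$ with $m=1$. In both cases $\omega \in \Omega_{pw}$, so since $\omega'$ was arbitrary, \eqref{eq_sz_inference} holds and $p \inferSZarg{\RbirdsStern} w$ follows.

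I expect the one delicate point to be the $\omega' \models \ol{b}$ case, where the naive strategy of merely flipping $w$ produces a world with the same falsification behaviour and hence no strict comparison; there one must instead exploit that $\satzCL{b}{p}$ sits in the higher stratum $\R_1$, so that a penguin-bird witness with empty level-$1$ falsification strictly dominates every $p\ol{b}$-world. Getting this stratification right is the crux, and it is exactly the feature distinguishing \systemSZ from the drowning behaviour of system~Z; the remaining work is routine bookkeeping with the verification/falsification table.
\MEINqed
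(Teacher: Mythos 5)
Your proposal is correct and takes essentially the same approach as the paper: the same partition $(\R_0,\R_1)$ with $\R_0=\{\satzCL{f}{b},\satzCL{w}{b},\satzCL{a}{f}\}$ and $\R_1=\{\satzCL{\ol{f}}{p},\satzCL{b}{p}\}$, and a direct verification of \eqref{eq_sz_inference} in which a winged, non-flying penguin-bird world dominates every $p\,\ol{w}$-world. The differences are only organizational: the paper uses the single witness type $p\,b\,\ol{f}\,w$ with a three-way case split on the dominated world, whereas you split on $b$ and, when $b$ holds, take the $w$-flip of the dominated world as witness; your explicit tolerance check justifying the partition is a sound addition that the paper states without derivation.
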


\begin{proof}
The inclusion-maximal partition  $ \fctOP{\RbirdsStern} =(\R_0,\R_1) $   of $\RbirdsStern  $  in Example~\ref{exa_nodrown} is given by
\(\R_0=\{ (f|b), (w|b), (a|f)\}\) and
\(\R_1=\{ (\overline{f} |p) , (b|p)\}\).

Consider $\omega \in \Omega$ with $\omega \models p \, \overline{w}$.  Choose an arbitrary $\omega' \in \Omega$ with  $\omega ' \models  p \, b \, 
\overline{f} w $.   We will  show $\omega ' \leSZarg{\RbirdsStern} \omega$.
Obviously, $\omega' $ falsifies only the conditional $(f|b)$ which is in $\mathcal{R}_0$, written as a formula $\falsW(\omega) = \{ (f|b) \}$. Since $\omega \models  p \overline{w}$, we can distinguish the following two cases:
\begin{itemize}
\item[(i)] If $\omega \models  p \, \overline{w} \, f $ then the conditional $ (\overline{f} |p) $  from $\mathcal{R}_1$ is falsified.
\item[(ii)] If $\omega \models  p \, \overline{w} \,  \overline{f} $ then we can again distinguish two cases:
\begin{itemize}
\item[(a)] If $\omega \models p \, \overline{w} \, \overline{f} \, \overline{b}$ then $(b|p)$ from $\mathcal{R}_1$ is falsified.
\item[(b)] If  $\omega \models p \, \overline{w} \, \overline{f} \, b$ then at least $(f|b), (w|b)$ (both from $\mathcal{R}_0$) are falsified.
\end{itemize} 
\end{itemize}
Due to \eqref{eq_sz_structure}, we thus get $\omega ' \leSZarg{\RbirdsStern} \omega$ in every case, implying  $p  \inferSZarg{\RbirdsStern} w$.  \MEINqed
\end{proof}

\section{Conclusions and Future Work}\label{sec_conclusions}

In this paper, we introduced \systemSZ and its underlying \namePS of worlds.
\systemSZgross inference captures both System~Z inference and skeptical c-inference and exhibits desirable properties. For instance, in contrast to system~Z, it avoids the drowning problem. In contrast to skeptical c-inference, it does not require to solve a complex constraint satisfaction problem, but is as tractable as system~Z because the \namePS on worlds is obtained directly from the ordered partition of \R\ and the verification/falsification behavior of \R.
In future work, we will empirically evalute \systemSZ with the reasoning platform InfOCF~\cite{BeierleEichhornKutsch2017KIzeitschrift} and investigate further inference properties of it.

\bibliographystyle{splncs03}

\end{document}